\newcounter{template}
\ifnum\value{template}=1
\newcommand{\fillbox}[3]
{\bgroup
  \dimen1=#1\relax
  \dimen2=#2\relax
  \sbox0{\includegraphics[width=#1]{#3}}%
  \ifdim\ht0>\dimen2
    \dimen0=\dimexpr \ht0-\dimen2\relax
    \adjustbox{clip=true,trim=0pt 0.5\dimen0 0pt 0.5\dimen0}{\usebox0}%
  \else
    \sbox0{\includegraphics[height=#2]{#3}}%
    \ifdim\wd0>\dimen1
      \dimen0=\dimexpr \wd0-\dimen1\relax
      \adjustbox{clip=true,trim=0.5\dimen0 0pt 0.5\dimen0 0pt}{\usebox0}%
    \else
      \usebox0
    \fi
  \fi
\egroup}
\newtheorem{definition}{Definition}
\newtheorem{remark}{Remark}
\newtheorem{lemma}{Lemma}
\DeclareMathOperator*{\argmin}{argmin}
\ifnum\value{template}=1
    \title[ ]{\bf{Adaptive Hybrid Local-Global Sampling for Fast Informed Sampling-Based Optimal Path Planning}}
    \author*[1]{\fnm{Marco} \sur{Faroni}}\email{marco.faroni@polimi.it}
    \author[2]{\fnm{Nicola} \sur{Pedrocchi}}\email{nicola.pedrocchi@stiima.cnr.it}
    \author[3]{\fnm{Manuel} \sur{Beschi}}\email{manuel.beschi@unibs.it}
    \affil[1]{\small \orgdiv{Dipartimento di Elettronica, Informazione e Bioingegneria}, \orgname{Politecnico di Milano}, \orgaddress{\street{Piazza Leonardo da Vinci 32}, \city{Milano}, \postcode{20133}, \country{Italy}}}
    \affil[2]{\small \orgdiv{STIIMA-CNR - Institute of Intelligent Industrial Technologies and Systems}, \orgname{National Research Council of Italy}, \orgaddress{\street{Via Alfonso Corti 12}, \city{Milano}, \postcode{20133}, \country{Italy}}}
    \affil[3]{\small \orgdiv{Dipartimento di Ingegneria Meccanica e Industriale}, \orgname{University of Brescia }, \orgaddress{\street{Via Branze 38}, \city{Brescia}, \postcode{25123}, \country{Italy}}}
    \title{Adaptive Hybrid Local-Global Sampling for Fast Informed Sampling-Based Optimal Path Planning}
    \author{Marco Faroni$^{1}$, Nicola Pedrocchi$^{2}$, Manuel Beschi$^{3}$
    \thanks{$^{1}$ Dipartimento di Elettronica, Informazione e Bioingegneria, Politecnico di Milano, Milan, Italy.
    {\tt\small marco.faroni@polimi.it}}%
    \thanks{$^{2}$ STIIMA-CNR - Institute of Intelligent Industrial Technologies and Systems, National Research Council of Italy 
    {\tt\small \{name.surname\}@stiima.cnr.it} }%
    \thanks{$^{1}$ Dipartimento di Ingegneria Meccanica e Industriale, University of Brescia, Brescia, Italy.
    {\tt\small manuel.beschi@unibs.it}}%
    }%
\begin{document}

\ifnum\value{template}=1
    \abstract{
    This paper improves the performance of RRT$^*$-like sampling-based path planners by combining admissible informed sampling and local sampling (\emph{i.e.}, sampling the neighborhood of the current solution). 
    An adaptive strategy regulates the trade-off between exploration (admissible informed sampling) and exploitation (local sampling) based on online rewards from previous samples. 
    The paper demonstrates that the algorithm is asymptotically optimal and has a better convergence rate than state-of-the-art path planners (e.g., Informed-RRT$^*$) in several simulated and real-world scenarios. 
    An open-source, ROS-compatible implementation of the algorithm is publicly available.
    }
    \keywords{
    Motion planning; Path planning; Sampling-based algorithms; Informed sampling; Online learning for motion planning; Informed-RRT*.
    }
\fi

\maketitle

\ifnum\value{template}=2
    \thispagestyle{empty}
    \pagestyle{empty}
    \begin{abstract}
    This paper improves the performance of RRT$^*$-like sampling-based path planners by combining admissible informed sampling and local sampling (\emph{i.e.}, sampling the neighborhood of the current solution). 
    An adaptive strategy regulates the trade-off between exploration (admissible informed sampling) and exploitation (local sampling) based on online rewards from previous samples. 
    The paper demonstrates that the algorithm is asymptotically optimal and has a better convergence rate than state-of-the-art path planners (e.g., Informed-RRT$^*$) in several simulated and real-world scenarios. 
    An open-source, ROS-compatible implementation of the algorithm is publicly available.
    \end{abstract}
    \begin{IEEEkeywords}
    Motion planning; Path planning; Sampling-based algorithms; Informed sampling; Online learning for motion planning; Informed-RRT*.
    \end{IEEEkeywords}
\fi

\section{Introduction}
\label{sec: intro}

Path planning is a fundamental problem in robotics and with a heavy impact on a broad variety of applications. 
For example, the recent developments in humanoid robotics require fast planning tools to handle high-dimensional systems. 
Similarly, industrial and service robotics often deal with dynamic environments where the robot must plan the motion on the fly. 
An example is a robot arm that picks objects from a conveyor belt or cooperates with humans to assemble a piece of furniture. 
A common thread of these applications is the high dimensionality of the search space and the limited available computing time to find a solution.

Path-planning problems are solved mainly through graph-based or sampling-based approaches.
Graph-based methods \cite{A_star,graph_based} are used mainly for navigation problems, while sampling-based methods are the most widespread in robotic manipulation because they are more efficient with high-dimensional systems. 
Sampling-based methods explore the search space by randomly sampling the robot configuration space to find a sequence of feasible nodes from start to goal. 
Different strategies for sampling and connecting nodes have given birth to different algorithms, such as RRT \cite{Lavalle-RRT}, EST \cite{Latombe-EST}, and PRM \cite{Kavraki-PRM}.

Sampling-based methods are successful in robotics because they do not require discretizing the search space, do not explicitly require the construction of the obstacle space, and generalize well to different robots' structures and specifications. 
These advantages come at the cost of weaker completeness and optimality guarantees. 
In particular, they can provide \emph{asymptotic optimality}; that is, the probability of converging to the optimal solution approaches one as the number of samples goes to infinity \cite{karaman-RRT*}. 
The convergence rate of such algorithms is relatively slow, and actual implementations usually stop the search way before they reach the optimum. 
A meaningful improvement to optimal planners came with the introduction of informed sampling \cite{Gammel-InformedRRT}. 
Informed sampling-based planners shrink the sampling space 
every time the solution cost decreases, making the convergence to the optimal solution faster. 
These planners show a slow convergence rate when the cost heuristic is poorly informative. 
In the case of path length minimization, the Euclidean distance can be chosen as a heuristic of the cost between two points. 
However, with many obstacles, there is a large difference between the Euclidean distance and the actual minimum path between the two points. 
In these cases, the convergence speed resembles that of uninformed planners (\emph{e.g.}, RRT$^*$ \cite{karaman-RRT*}). 
This paper tackles this issue by proposing a mixed strategy that alternates sampling the informed set and the neighborhood of the current solution. 
The rationale is that the cost of the solution improves by sampling its neighborhood (\emph{i.e.}, local sampling), with a consequent quick reduction of the measure of the informed set.

Alternating admissible and locally informed sampling is an example of the classic exploration-versus-exploitation dilemma, which is hardly solvable with a fixed ratio between the usage of the two sampling strategies.  
To overcome this issue, we propose an adaptive technique to dynamically balance the choice of one sampling strategy over the other. 
The result is that the search algorithm prefers exploitation (\emph{i.e.}, local sampling) only as long as it is useful and switches to exploration (\emph{i.e.}, admissible informed sampling) to avoid stagnation.

The paper's contribution is twofold.  
First, it defines a mixed sampling strategy that combines global and local informed sampling for asymptotically optimal sampling-based path planners. 
Local informed sampling oversamples the neighborhood of the current solution to quickly reach a local optimum, while global informed sampling guarantees asymptotic optimality. 
Second, It proposes an asymptotically optimal algorithm that uses the mixed sampling strategy and dynamically adjusts the trade-off between global and local sampling, showing that this outperforms state-of-the-art planners, such as Informed-RRT$^*$, on different classes of problems.

An open-source ROS-compatible version of the planner is publicly available \cite{git_mirrt}. 

The paper is organized as follows. 
Section \ref{sec: informed-planning} introduces the reader to optimal planning and informed sampling. 
Section \ref{sec: related_works} discusses previous works on the acceleration of informed sampling-based planners.
Section \ref{sec: motivation} discusses the motivation of this work through some illustrative examples. 
Section \ref{sec: method} describes the proposed method.
Section \ref{sec: results} compares it with other methods. 
Section \ref{sec: conclusions} concludes and discusses future works.

\section{Informed sampling-based optimal path planning}
\label{sec: informed-planning}

This section introduces the concepts of path planning, informed sets, and informed sampling used throughout the paper.

The path planning problem is formulated in the configuration space, $X \subseteq \mathbb{R}^n$, which denotes all possible configurations $x$ of the system (for robot manipulators, $x$ is usually a vector of joint angles). 
Let $X_{\mathrm{obs}}$ be the space of all those configurations in collision with an obstacle, and $X_{\mathrm{free}}=\mathrm{cl}( X \setminus X_{\mathrm{obs}})$ the obstacle-free configuration space, where $\mathrm{cl(\cdot)}$ denotes the closure of the set. 
\begin{definition}{\emph{(optimal path planning)} [adapted from  \cite{Gammel-InformedRRT}]}
\label{def: optimal-motion-planning}
    Given a starting point $x_{\mathrm{start}}$ and a set of desired goal points $X_{\mathrm{goal}} \subset X$, \emph{optimal path planning} is the problem of finding a curve $\sigma^* \,: [0,1] \rightarrow X_{\mathrm{free}}$ such that: 
    \begin{equation}
        \sigma^* = \argmin_{\sigma \in \Sigma} \big\{ c(\sigma) \, \vert \, \sigma(0) = x_{\mathrm{start}}, \sigma(1) \in X_{\mathrm{goal}} \big\}
    \end{equation}
    where $c: \,\Sigma \rightarrow \mathbb{R}_{\geq 0}$ is a Lipschitz continuous cost function associating a cost $c(\sigma)$ to a curve $\sigma \in \Sigma$, $\Sigma$ is the set of solution paths, and $\mathbb{R}_{\geq 0}$ is the set of non-negative real numbers.
\end{definition}
\begin{remark}
Cost function $c$ is often the length of the path so that the optimal motion plan is the shortest collision-free path from $x_{\mathrm{start}}$ to $X_{\mathrm{goal}}$. 
\end{remark}
\noindent If an algorithm can find a solution to the optimal path planning problem, then it is said to be an \emph{optimal path planner}. 
Sampling-based path planners, such as RRT$^*$ \cite{karaman-RRT*}, can only ensure the probabilistic convergence to the optimal solution. 
This weaker form of optimality is referred to as \emph{(almost-sure) asymptotic optimality}.
The convergence rate of an asymptotically optimal planner is related to the probability of sampling points that can improve the current solution. 
This set is referred to as the \emph{omnisicient set} \cite{Gammel-InformedRRT}. 
RRT$^*$ and similar algorithms, as proposed in \cite{karaman-RRT*}, are very inefficient at sampling the omniscient set (the probability that RRT$^*$ samples a point that belongs to the omniscient set decreases factorially in the state dimension \cite{Gammel-InformedRRT}). 
To increase the probability of sampling the omniscient set, Gammell et al. \cite{Gammel-InformedRRT} coined the concept of informed sampling; that is, sampling an approximation of the omniscient set (the \emph{informed set}) so that the probability of finding a point that improves the current solution is higher. 
If the informed set is a superset of the omniscient set, it is referred to as an \emph{admissible informed set}. 
\begin{definition}{\emph{(admissible informed set)}[adapted from  \cite{Gammel-InformedRRT}]}
\label{def: admissible-informed-set}
    An informed set $X_{\hat f}$ is a heuristic estimate of the omniscient set $X_f$.
    If $X_{\hat{f}} \supseteq X_f$, the informed set is said to be \emph{admissible}.
\end{definition}
\noindent In minimum-length path planning, it is always possible to construct an admissible informed set by considering that the shortest path through a sample $x \in X$ is lower bounded by the sum of Euclidean distances from $x_{\mathrm{start}}$ to $x$ and from $x$ to $x_{\mathrm{goal}} \in X_{\mathrm{goal}}$).
As a consequence, all possibly improving points lie in the so-called $\mathcal{L}_2$-informed set, $X_{\hat{f}}$, given by: 
\begin{equation}
\label{eq: informed-set}
    X_{\hat{f}} = \big\{ x \in X_{\mathrm{free}}\, \vert\, \| x-x_{\mathrm{start}}\|_2 + \|x_{\mathrm{goal}} - x\|_2 < c_k \big\}
\end{equation}
where $c_k$ is the cost of the best solution at iteration $k$.
Notice that such an informed set is equivalent to the intersection of the free space $X_{\mathrm{free}}$ and an $n$-dimensional hyper-ellipsoid symmetric about its transverse axis with focal points at $x_{\mathrm{start}}$ and $x_{\mathrm{goal}}$, transverse diameter equal to $c_k$, and conjugate diameters equal to $\sqrt{c_k^2-c_{\mathrm{min}}^2}$, where 
\begin{equation}
    c_{\mathrm{min}}=\| x_{\mathrm{goal}} - x_{\mathrm{start}}\|_2.
\end{equation}
The volume of the hyper-ellipsoid decreases progressively as the solution cost $c_k$ decreases, improving the convergence rate of the algorithm.

\section{Related works}
\label{sec: related_works}

Informed sampling stems from the simple but effective idea of sampling only points with a higher probability of improving the solution. 
This is not a new idea, in principle, as several works use heuristics to bias sampling \cite{Urmson2003, Amato:resample, Salzman2013, Shan2014, Ge2016, SantanaCorreia2018, Yu2019, Francis-bayesian-rrt,Faroni_RAL2023}.

The main issue with sampling bias is that, depending on the geometry of the problem, the heuristic may discard points of $X_f$.  \cite{Gammel-InformedRRT}.
This can be deleterious for the convergence speed, and it may even compromise the optimality of the algorithm. 
Compared to these works, admissible informed sampling never excludes any points possibly belonging to the omniscient set; thus, it retains asymptotic optimality regardless of the geometry of the problem.  
Nonetheless, convergence speed may be slow when the admissible heuristic is not informative.

Few works attempted to speed up the convergence rate by combining informed planning and local techniques. 
In \cite{Kim-Song-wrapping-informed-rrt-2015, Kim-Song-wrapping-informed-rrt-2018}, Kim and Song propose to run a deterministic path short-cutter every time the algorithm improves the solution. 
The short-cutting procedure acts as follows: i) it considers three consecutive nodes on the path at a time; ii) it discretizes the two corresponding edges; iii) it tries to connect the extreme nodes to the sampled segment until it finds a collision; iv) it moves the central node to the intersection of the two segments found in the previous step. 
Such an approach has two main drawbacks. 
First, the computational time owed to the short-cutting is significant as it requires an iterative edge evaluation (\emph{i.e.}, collision checking) every time it tries to refine a triple of nodes. 
Second, this approach is suitable only for minimum-path problems, as it relies on the triangular inequality applied to each triple of nodes.
\cite{hauer-DRRT} proposes to refine the current solution by moving the nodes of the tree based on gradient descent. 
Yet, every time a node is moved, the refinement process requires an intensive edge evaluation.

The idea of combining global and local optimization was also explored by Choudhury et al. \cite{Gammell-RABIT}, who propose a hybrid use of BIT$^*$ \cite{Gammell2015-bit}, a lazy heuristic-driven informed planner and CHOMP \cite{ratliff-chomp}, a gradient-based local planner. 
Roughly speaking, the local planner is used to solve a two-point problem between a pair of nodes. 
One main drawback is that the local planner is called every time an edge is evaluated, which may be computationally counter-effective. 
Other variants of BIT$^*$ were proposed in \cite{Gammell-AIT}, and \cite{Gammell-ABIT}, but focusing on how to improve the heuristics by experience.
\cite{Faroni_RAL2023} uses online learning (clustering of previous edges and Multi-Armed Bandits) to oversample promising regions.

Finally, \cite{Panagiotis-relevant-region} and \cite{Srinivasa-GUILD} propose approaches to focus the search on subsets of the informed set.
\cite{Srinivasa-GUILD} decomposes a planning problem into two sub-problems and applies informed sampling to them. 
The union of the informed sets of the sub-problems is strictly contained in the informed set of the initial problem; thus, the search focuses on a smaller region. 
However, the performance of such an approximation strongly depends on the problem geometry, and the authors do not discuss how to retain asymptotic optimality.
\cite{Panagiotis-relevant-region} estimates the cost-to-come of the tree leaves to bias the search towards a subset of the informed set, called the relevant region. 
In this case, the trade-off between exploration and exploitation is fixed. 
Thus, the performance depends on the problem geometry, and it may be even worse than admissible informed sampling.

Our approach is similar to the works mentioned above as it alternates informed sampling and local refinement of the path. 
Compared to \cite{Kim-Song-wrapping-informed-rrt-2015, Kim-Song-wrapping-informed-rrt-2018} and \cite{hauer-DRRT}, our method refines the path by sampling the neighborhood of the current solution, and this allows for gradient-free refinement, also with generic cost functions. 
Moreover, \cite{Kim-Song-wrapping-informed-rrt-2015,Kim-Song-wrapping-informed-rrt-2018} and \cite{hauer-DRRT} tend to favor exploitation (\emph{i.e.}, path refinement) rather than exploration, wasting time optimizing suboptimal solutions (see numerical results in Section \ref{sec: results}). 
Similarly, \cite{Gammell-RABIT} and \cite{Panagiotis-relevant-region} use a fixed balance between exploration and exploitation; thus, the performance may vary a lot across different problems. 
Our method adjusts the trade-off between exploration and exploitation according to the cost progression, adapting to different problems. 
In this sense, we could use our adaptive scheme in \cite{Panagiotis-relevant-region} to dynamically balance the trade-off between exploration and exploitation and in \cite{Srinivasa-GUILD} to retain asymptotic optimality.

\section{Motivation for an adaptive mixed sampling strategy}
\label{sec: motivation}

\begin{figure*}[tpb]
	\centering
	\subfloat[][Large obstacle between start and goal.]
	{\includegraphics[height=2.4cm,width=0.3\textwidth]{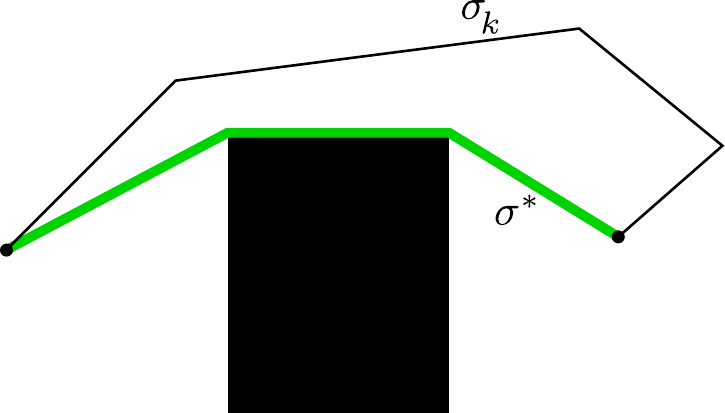} \label{fig: examples-a}}\,\,%
	\subfloat[][Narrow passage, current solution stacked on a local minimum.]
	{\includegraphics[height=2.4cm,width=0.3\textwidth]{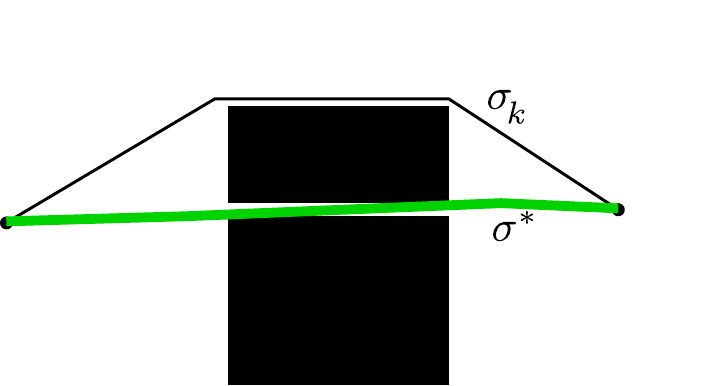}\label{fig: examples-b}}\,\,%
	\subfloat[][Narrow passage, current solution far from local minima.]
	{\includegraphics[height=2.4cm,width=0.3\textwidth]{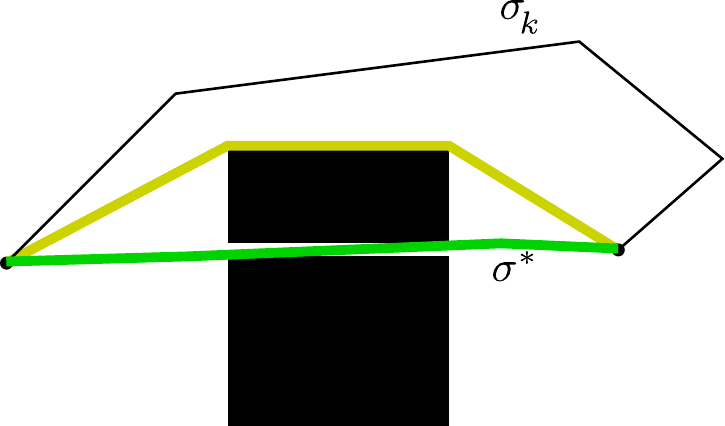}\label{fig: examples-c}}%
	\caption{Examples of planning situations where local sampling is useful (a), deleterious (b), or unable to find the global optimum but useful to reduce the measure of the informed set (c).}
	\label{fig: examples}
\end{figure*}

To understand the motivation behind this work, consider the minimum-path problem in Figure \ref{fig: examples-a}. 
Because of the presence of a large obstacle between $x_{\mathrm{start}}$ and $x_{\mathrm{goal}}$, the $\mathcal{L}_2$-informed set is large and poorly informative. 
Sampling the neighborhood of the current solution would be much more efficient than considering the whole $\mathcal{L}_2$-informed set, as the path would quickly converge to the global optimum. 
This situation is expected when the current and the optimal solutions are homotopic. 
We will refer to this sampling strategy as \emph{local sampling}. 
%

On the other hand, when the current solution is locally optimal, any efforts on the local optimization would be useless.
For example, in Figure \ref{fig: examples-b}, the optimal solution passes through the narrow passage between the two obstacles; thus, sampling the neighborhood of the current solution would lead to a local optimum (yellow in Figure \ref{fig: examples-c}).
As the solution approaches the local optimum, the probability of improving the solution via local sampling is equal to zero.
%

Notice that a fast convergence to a local minimum quickly reduces the volume of the informed set. 
However, it is crucial to understand when the local sampling is beneficial without losing the asymptotic global optimality.

In this paper, we combine admissible informed sampling and local sampling in a \emph{mixed sampling strategy}.
On the one hand, sampling the admissible informed set guarantees that all points from the omniscient set are taken into account. 
On the other hand, local sampling has a twofold role. 
First, if the local and the global optima correspond, it quickly converges to the solution, as in Figure \ref{fig: examples-c}. 
Second, it reduces the size of the admissible informed set. 
Indeed, the Lebesgue measure of the $\mathcal{L}_2$-informed set is directly related to the best cost to date $c_k$ as follows: 
\begin{equation}
    \lambda(X_{\hat{f}}) = c_k\, (c_k^2 - c_{\mathrm{min}}^2)^{\frac{n-1}{2}}\,\frac{\zeta_n}{2^n} 
\end{equation}
where $\zeta_n$ is the Lebesgue measure of the unit ball (dependent only on $n$) \cite{Gammel-InformedRRT}.
Hence, improving the current solution (even in the neighborhood of a local minimum) enhances the convergence speed to the globally optimal solution.

\section{Proposed approach}
\label{sec: method}

This section describes the proposed mixed sampling strategy. First, it defines the local informed set. Second, it designs an algorithm to dynamically change the local sampling probability based on the cost evolution. Finally, it proves asymptotic optimality. 

\subsection{Mixed-strategy sampling}
\label{sec: local_sampling}
Consider an $n$-dimensional path planning problem solved by a sampling-based planner.
Let $\sigma_k\in X_{\mathrm{free}}$ be the current solution at iteration $k$ and $c_k=c(\sigma_k)$.
An RRT$^*$-like planner is asymptotically optimal if the algorithm that connects nodes satisfies conditions on the minimum rewire radius and the sampler draws nodes from a superset of the omniscient set. 
If we drop the second condition, such a relaxed planner would converge to a local optimum.
To formulate this idea more formally, we introduce the notion of \emph{local informed set}. 
Then, we combine it with admissible informed sampling to obtain the adaptive \emph{mixed-strategy sampler} used in the proposed planner.

\begin{definition}{\emph{(local informed set)}}
    The \emph{local informed set} of the current solution $\sigma_k$ is the intersection of the admissible informed set and the set of points with distance  smaller than $R$ from $\sigma_k$: 
    \begin{equation}
    \label{eq: local-informed-set}
    X_{\hat{f},l} = \left\{ x\in X_{\hat{f}} \;\Big\vert\;   \min_{y \in \sigma_k} \Big( \left\Vert x-y \right\Vert_2 \Big) < R \right\}.
\end{equation}
\end{definition}

\begin{lemma}{\emph{(local optimality of local sampling)}}
Consider an asymptotically optimal path planner and let the sampling algorithm draw samples only from the local informed set. 
The planner converges to a local minimum with a probability equal to one.
\end{lemma}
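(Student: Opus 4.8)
The plan is to reduce the claim to the standard asymptotic-optimality guarantee of the underlying RRT$^*$-like planner, applied to a \emph{restricted} planning problem whose free space is the $R$-tube around the current solution. First I would make precise what ``local minimum'' means here: a solution $\sigma^{\mathrm{loc}}$ is a local minimum if no feasible path contained in the $R$-neighborhood of $\sigma^{\mathrm{loc}}$ has strictly smaller cost, equivalently if the omniscient set restricted to that neighborhood (call it the \emph{local omniscient set}) has zero Lebesgue measure. Because $X_{\hat{f}}$ is admissible, i.e. $X_{\hat{f}} \supseteq X_f$, intersecting both sides with the $R$-tube shows that the local informed set $X_{\hat{f},l}$ contains the local omniscient set; hence confining the sampler to $X_{\hat{f},l}$ never discards a point that could improve the solution \emph{within the tube}.

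The backbone is a monotone-convergence-plus-contradiction scheme. By construction of an RRT$^*$-like planner the best cost $c_k$ is non-increasing in $k$ (rewiring never raises the cost, and only improving solutions are retained) and is bounded below by $c_{\mathrm{min}}$; therefore $c_k \downarrow c_\infty$ almost surely for some $c_\infty \ge c_{\mathrm{min}}$, with the corresponding solutions accumulating at a limiting path $\sigma_\infty$. I would then suppose, for contradiction, that $\sigma_\infty$ is \emph{not} a local minimum. Then its local omniscient set has positive measure, and, using the Lipschitz continuity of $c$, there exist $\delta > 0$ and a fixed region $A \subseteq X_{\hat{f},l}$ with $\lambda(A) > 0$, every point of which lies on a feasible path of cost at most $c_\infty - \delta$.

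To close the contradiction I would invoke the asymptotic optimality assumed for the base planner, now read as a statement about the restricted problem on $X_{\mathrm{free}} \cap \{x : \min_{y\in\sigma_\infty}\|x-y\|_2 < R\}$. Since the planner draws infinitely many samples and, once the solution is close enough to $\sigma_\infty$, each sample lands in the positive-measure set $A$ with probability bounded away from zero, the second Borel--Cantelli lemma guarantees that $A$ is hit infinitely often with probability one. Any such hit, followed by the rewiring step, produces a solution of cost at most $c_\infty - \delta$, contradicting $c_k \downarrow c_\infty$. Hence $\sigma_\infty$ is a local minimum almost surely.

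The main obstacle I anticipate is that the sampling region is not static: $X_{\hat{f},l}$ is tied to the \emph{current} solution $\sigma_k$, which moves as the cost decreases, so one cannot literally apply the RRT$^*$ proof to a single fixed restricted problem. I would handle this by using the monotone convergence of $c_k$ to argue that the tube stabilizes in the limit, so that for large $k$ the sampling domain is arbitrarily close to the fixed tube around $\sigma_\infty$ and the lower bound on the per-sample hitting probability of $A$ persists. A second technical point is checking that the minimum-rewire-radius condition of the base planner still holds under confined sampling: since $X_{\hat{f},l}$ (and hence $A$) has positive measure, the expected number of samples inside the tube grows without bound, the local sample density diverges, and the connection-radius requirement is met, possibly after rescaling the radius constant. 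These two points are where the care is needed; the remainder is a direct transcription of the standard asymptotic-optimality argument to the tube.
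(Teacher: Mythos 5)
Your proof rests on the same key observation as the paper's: by Lipschitz continuity of $c$, if the current solution is not a local optimum then the omniscient set meets every neighborhood of $\sigma_k$ in a set of positive measure, so local sampling improves the solution with probability greater than zero. The paper's proof stops at exactly that point; your monotone-convergence and Borel--Cantelli completion (and your handling of the moving tube) merely make explicit the limiting argument the paper leaves implicit, so the two proofs are essentially the same.
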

\begin{proof}: 
If the current solution is not a local optimum, the intersection of the omniscient set and any neighborhoods of $\sigma_k$ is not empty ($c$ is Lipschitz):
\begin{equation*}
    X_f \cap X_{\hat{f},l} \neq \emptyset\quad\mbox{with}\, X_{\hat{f},l}\,\mbox{neighborhood of}\,\sigma_k
\end{equation*}
It follows that local sampling improves the solution with a probability greater than zero whenever the solution is not (locally) optimal.
\end{proof}

We define hereafter a mixed-strategy sampler to combine admissible and locally informed sampling soundly.

\begin{definition}{\emph{(mixed-strategy sampler)}}
A \emph{local sampler} and a \emph{global sampler} are algorithms that draw samples from $X_{\hat{f},l}$ and  $X_{\hat{f}}$, respectively.
A \emph{mixed-strategy sampler} draws samples by using a local sampler with probability $\phi$ and a global sampler with probability $1-\phi$.
\end{definition}

\begin{remark}
A mixed-strategy sampler is admissible if $\phi < 1$. 
\end{remark}

\begin{lemma}{\emph{(optimality of admissible mixed-strategy samplers)}} \label{th: asympotic-optimality}
A sampling-based path planner that is asymptotically optimal under uniform sampling distribution is asymptotically optimal also under admissible mixed-strategy sampling.
\end{lemma}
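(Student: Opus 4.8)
The lemma says that if an RRT*-like planner is asymptotically optimal under uniform sampling, it remains asymptotically optimal under "admissible mixed-strategy sampling." The mixed-strategy sampler uses a local sampler (drawing from $X_{\hat{f},l}$) with probability $\phi$ and a global sampler (drawing from $X_{\hat{f}}$) with probability $1-\phi$. It's admissible if $\phi < 1$.

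**What asymptotic optimality requires:** The key insight from the earlier text is that asymptotic optimality of RRT* depends on (1) the rewiring/connection conditions on the minimum rewire radius, and (2) the sampler drawing from a superset of the omniscient set (the admissible informed set). As long as we keep sampling the admissible informed set $X_{\hat{f}}$ with positive probability in a way that covers it, we maintain the coverage property needed.

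**The core argument I'd make:**

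1. **Positive probability of global sampling.** Since $\phi < 1$, the global sampler is invoked with probability $1 - \phi > 0$ at each iteration. So a constant fraction (in expectation) of samples are drawn from $X_{\hat{f}}$.

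2. **Global sampler covers the admissible informed set.** The global sampler draws from $X_{\hat{f}} \supseteq X_f$ (the omniscient set), by the admissibility of the informed set. So any point (or region of positive measure) in the omniscient set has positive probability of being sampled by the global sampler.

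3. **The mixed sampler inherits coverage.** Combine these: for any region $A \subseteq X_{\hat{f}}$ of positive Lebesgue measure, the probability that a single mixed-strategy sample lands in $A$ is at least $(1-\phi) \cdot P_{\text{global}}(A) > 0$. (The local sampler only *adds* probability in $X_{\hat{f},l} \subseteq X_{\hat{f}}$; it never removes coverage.)

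4. **Infinitely many samples in any positive-measure region.** By a Borel–Cantelli / independence argument: since each iteration independently has probability $\geq (1-\phi) P_{\text{global}}(A) > 0$ of sampling $A$, the sum of probabilities over infinitely many iterations diverges, so almost surely infinitely many samples land in $A$.

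5. **Reduce to the known result.** The original asymptotic optimality proof (Karaman & Frazzoli) only requires that the sampling distribution have a density bounded below by a positive constant on the relevant set (or more precisely, that balls of the rewire radius around the optimal path get sampled infinitely often). Show the mixed-strategy sampling density is bounded below by $(1-\phi)$ times the uniform density on $X_{\hat{f}}$. Since the rewire radius conditions are unchanged (they're a property of the connection algorithm, not the sampler), the existing proof goes through verbatim with the constant rescaled by $(1-\phi)$.

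**Expected proof structure in the paper:** Short proof. State that admissibility ($\phi < 1$) guarantees the global sampler is used infinitely often. Then note the mixed sampling density dominates $(1-\phi)$ times the uniform density over $X_{\hat{f}}$, which is a superset of $X_f$. Invoke the standard asymptotic optimality result (the conditions on rewire radius are untouched, and the lower-bounded sampling density over the omniscient set is preserved). Conclude optimality is retained.

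**Anticipated main obstacle:** The delicate part is rigorously connecting "samples land in every positive-measure subset infinitely often" to the actual convergence of the RRT*-cost to the optimum—i.e., ensuring the *local* sampling doesn't somehow interfere with the tree structure or the rewiring. But since local samples still lie in $X_{\hat{f}} \subseteq X_{\text{free}}$ and are valid configurations, they can only help (add nodes) and never violate the rewire conditions. So the proof is essentially: mixed sampling = uniform-on-$X_{\hat{f}}$ sampling with a bounded-below density, plus extra (harmless) concentration near the current path.

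The proof should be clean and short, leveraging the density lower bound.
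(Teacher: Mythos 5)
Your proposal takes essentially the same route as the paper: both arguments reduce to observing that, for $\phi<1$, the mixed sampling density is a mixture dominating $(1-\phi)$ times a strictly positive uniform density, and then invoke the known asymptotic-optimality result for non-uniform sampling with a lower-bounded density. The one small discrepancy is your claim that the rewire-radius conditions are ``unchanged'': the paper, following Appendix D of the FMT$^*$ analysis, compensates for the thinner effective uniform coverage by inflating the rewire radius by the factor $(1-\phi)^{-1/n}$, which is the precise form of the ``constant rescaling'' you gesture at.
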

\begin{proof}:
A mixed-strategy sampler samples $X_{\mathrm{free}}$ with non-uniform probability density $d$.
If such a sampler is admissible, $d$ can be seen as a mixture of probability densities such that: 
\begin{equation*}
    d = (1-\phi) \, d_1 + \phi\,  d_2 
\end{equation*}
where $d_1$ is a strictly positive uniform probability density over $X_{\mathrm{free}}$ and
\begin{equation*}
    d_2 = \frac{d\, \lambda(X_{\mathrm{free}}) - 1 + \phi}{\phi\, \lambda(X_{\mathrm{free}})}.
\end{equation*}
Based on this consideration, the asymptotic optimality of the path planner traces back to the proof of asymptotic optimality of \cite{Pavone-FMT} with non-uniform sampling. 
In particular, the planner is still asymptotically optimal by adjusting the rewire radius of a factor $(1-\phi)^{-1/n}$, as proved in Appendix D of \cite{Pavone-FMT}.
\end{proof}

At each iteration, the mixed-strategy sampler should select an appropriate value of $\phi$ based on the likelihood of improving the current solution. 
This is important to exploit the advantages of both admissible and local informed sampling (respectively, global asymptotic optimality and fast convergence to local optima) and mitigate the flaws (slow convergence speed and stagnation into local minima).  
We denote the guess that $\sigma_k$ is not a local optimum at iteration $k$ by $p_k\in[0,1]$. 
If $c_k<c_{k-1}$, we increase $p_{k+1}$  proportionally to the relative improvement of the cost such that: 
\begin{equation}
p_{k+1}=\nu p_k+ \left(1-\nu\right) \frac{c_{k-1}-c_{k}}{c_{k-1}-u}
\label{eq: guess}    
\end{equation}
where $\nu\in[0,1)$ is a forgetting factor that smooths the evolution of $p$ and $u$ is an admissible estimate of the best cost $c^*$.
Note that the cost $c_k$ is non-increasing (namely, $c_k \leq c_{k-1}$), therefore $p_k$ is a strictly positive number (assuming $p_0>0$). %
Moreover, $p_{k+1}\leq 1$ because $c_k\leq u$ and $p_0\leq1$.

It follows that a selector that uses $\phi = p_k$ 
is admissible.

\subsection{Proposed algorithm}
\label{subsec: proposed-algorithm}

The proposed planner is the variant of Informed-RRT$^*$ in Algorithm \ref{alg: algorithm}. 
It uses the guess $p_k$ as the probability to sample the local informed set (lines 1--6). 
Sample $x$ is used to extend the tree (line 7); and $p_{k+1}$ is updated according to \eqref{eq: guess} (lines 8 and 9).

\begin{algorithm}[tpb]
\KwIn{$p_k$, $\sigma_{k-1}$, $c_{k-1}$, $\nu$,  $R_0$}
\KwOut{$p_{k+1}$, $\sigma_k$, $c_{k}$}

  $j=\texttt{rand.uniform}([0,1])$\;
  \uIf{$j<p_k$}{
    $R = R_0 (c_k - u)$\;
    $x=\texttt{localSampling}(\sigma_{k-1},R)$\;
  }
  \Else{
    $x=\texttt{informedSampling}(\sigma_{k-1},c_k)$\;
  }
  $(\sigma_k,c_k)=\texttt{connectAndRewire}(x)$\;
  \uIf{$c_k<c_{k-1}$}{
    $p_{k+1}=\nu\, p_k+ \left(1-\nu\right) \dfrac{c_{k-1}-c_{k}}{c_{k-1} - u}$
  }
  \Else{
    $p_{k+1}=\nu\, p_k$
  }
 \caption{Mixed-strategy informed planner}
 \label{alg: algorithm}
\end{algorithm}

Procedures $\texttt{informedSampling}$ and $\texttt{localSampling}$ sample the admissible informed set \eqref{eq: informed-set} and the local informed set \eqref{eq: local-informed-set}, respectively. 
The former follows the implementation of \cite{Gammel-InformedRRT}, and the latter uses Algorithm \ref{alg: localsampling}.

Algorithm \ref{alg: localsampling} randomly samples a ball of radius $R$ centered at a random point along the current solution path. 
First, it uniformly samples the $n$-dimensional unit ball and assigns the value to $b$ (lines 2--4). 
Then, it picks a random point, $\sigma(s)$, on path $\sigma$. 
Therefore, the final candidate sample is obtained by scaling $b$, from the unit ball to the ball of radius $R$ and centered in $\sigma(s)$ (lines 5--6). 
Finally, it uses rejection to ensure that $x \in X_{\hat{f}}$.
Note that the rejection of the candidate is unlikely if $R$ is small. 

Algorithm \ref{alg: localsampling} does not sample the local informed set uniformly. 
The points closer to the path have a higher probability of being sampled than points near the boundary of the tube. 
Moreover, Algorithm \ref{alg: localsampling} over-samples regions ``inside'' the corners of the path. 
Non-uniform local sampling does not affect the asymptotic optimality of the planner (Lemma \ref{th: asympotic-optimality}). 
Moreover, in minimum-length problems, over-sampling regions inside the corners may be beneficial in reducing the path length.

\begin{algorithm}[tpb]
  \KwIn{curve $\sigma$, radius $R$ of the local informed set}
  \KwOut{sample $x\in X_{\hat{f},l}$}
   \Repeat{$x\in X_{\hat{f}}$}{
   $w = \texttt{rand.normal}(0,1)$\;
   $r = \texttt{rand.uniform}([0,1])$\;
   $b= \left(\sqrt[n]{r} \frac{w}{\|w\|_2}\right)$\;
   $s = \texttt{rand.uniform}([0,1])$\;
   $x= \sigma(s)+ R\, b$\;
      }
   \caption{$\texttt{localSampling}$ procedure}
   \label{alg: localsampling}
\end{algorithm}

\subsection{Algorithm tuning and convergence performance}
\label{sec: tuning}
Algorithm \ref{alg: algorithm} has two parameters more than Informed-RRT$^*$: the radius $R_0$ and the forgetting factor $\nu$. 
Appendix \ref{sec: Appendix} provides an illustrative example showing the effect of the parameters on the convergence.
Summarizing the results, $R_0 \in [0.01,0.02]$ and $\nu \approx 0.999$ consistently provide the best results across problems of different dimensionality and geometry.

\section{Experiments}
\label{sec: results}

We test our Mixed-strategy Informed planner (MI-RRT$^*$) with robot manipulators (6, 12, 18 degrees of freedom), navigation of mobile manipulators, and a real manufacturing case study. 
We demonstrate that MI-RRT$^*$ consistently outperforms the baselines.

\subsection{Robot manipulators}
\label{sec: manipulators}

\begin{figure*}[tbp]
	\centering
	\subfloat[][$n=6$]
	{\includegraphics[width=0.25\textwidth]{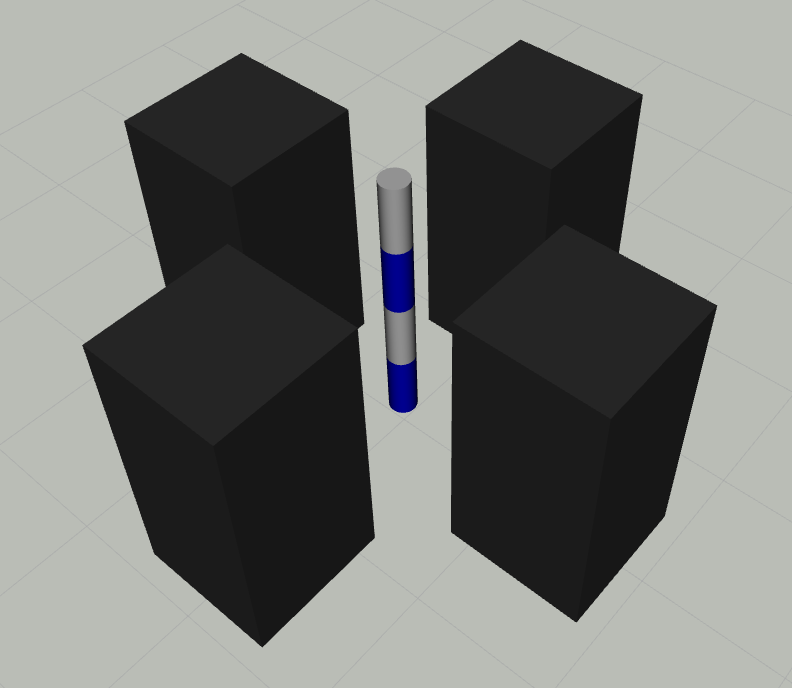} \label{fig: cells.a}}\,\,\,%
    \subfloat[][$n=12$]
	{\includegraphics[width=0.25\textwidth]{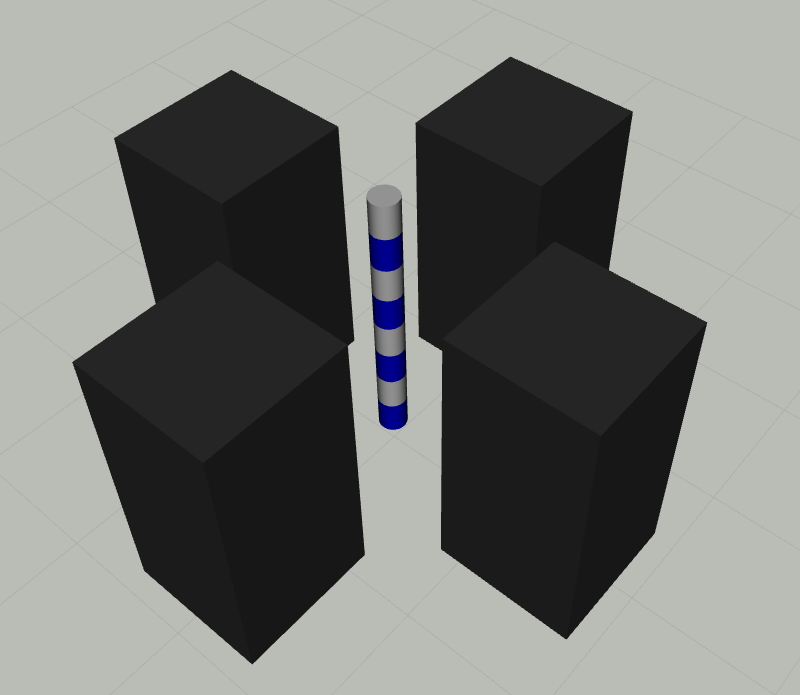} \label{fig: cells.c}}\,\,\,%
	\subfloat[][$n=18$]
	{\includegraphics[width=0.25\textwidth]{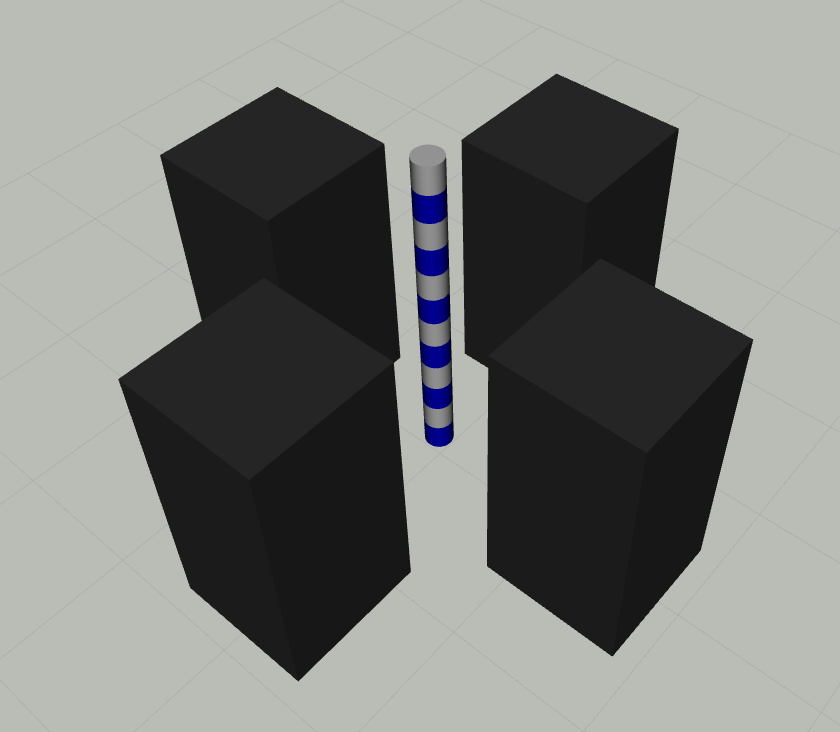} \label{fig: cells.d}}%
	\caption{Robot manipulator benchmark. $n$ is the robot's number of degrees of freedom.}
	\label{fig: cells}
\end{figure*}


We consider three robotic cells (Figure \ref{fig: cells}). 
Each cell has four rectangular obstacles and a serial manipulator (6, 12, and 18 degrees of freedom, respectively). 
The cell descriptions and usage examples are available at \cite{high_dof_cell_github}.

We compare our planner (MI-RRT$^*$) with Informed-RRT$^*$ \cite{Gammel-InformedRRT}, which uses a pure admissible informed sampling method, and wrapping-based Informed-RRT$^*$ (Wrap-RRT$^*$) \cite{Kim-Song-wrapping-informed-rrt-2018}, which applies a shortcutting procedure whenever it improves the solution.
The additional parameters of MI-RRT$^*$ are tuned according to Section \ref{sec: tuning}, namely $R=0.02(c_k-u)$ and $\nu=0.999$.

First of all, we show an example of a query to illustrate the behavior of the algorithms. 
Figure \ref{fig: high_dof_comparison_dof6} shows the cost trend for a random planning query with $n=6$, repeated 30 times for each planner. 
MI-RRT$^*$ provides a faster convergence rate and a smaller variance. 
Moreover, the median cost of the proposed algorithm is closer to the 10\%-percentile than the other strategies, highlighting the capability of MI-RRT$^*$ to converge sooner to the global minimum. 
The same behavior is clear also for $n=12$, as shown in Figure \ref{fig: high_dof_comparison_dof12}. 
In this case, Informed-RRT$^*$ suffers more from the curse of dimensionality, while Wrapping-based RRT$^*$ gets stuck in a local minimum for several iterations.

\begin{figure*}[tbp]
	\centering
	\subfloat[][$n=6$]
	{\includegraphics[height=4cm,width=0.35\textwidth]{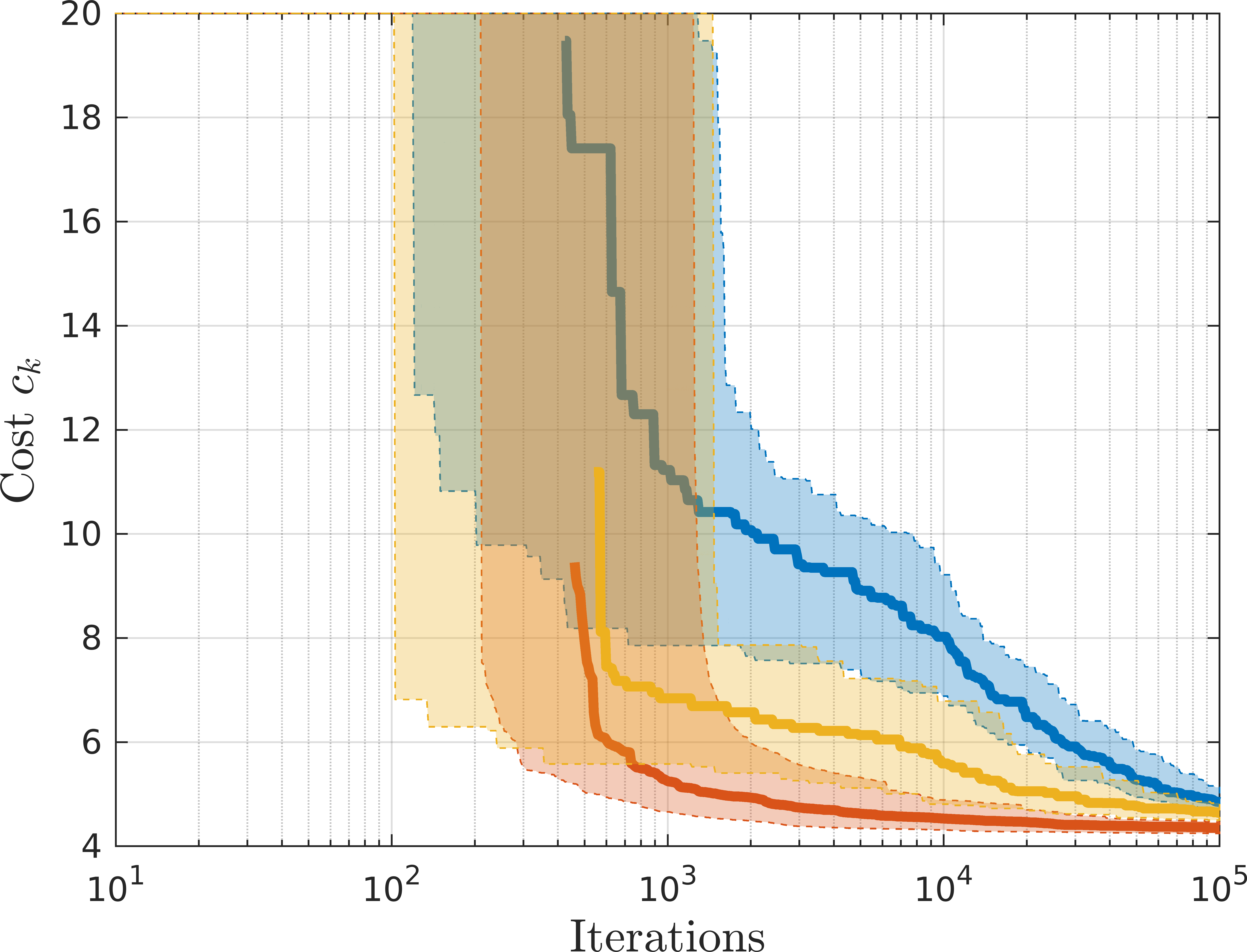} \label{fig: high_dof_comparison_dof6}}\,\quad%
	\subfloat[][$n=12$]
	{\includegraphics[height=4cm,width=0.35\textwidth]{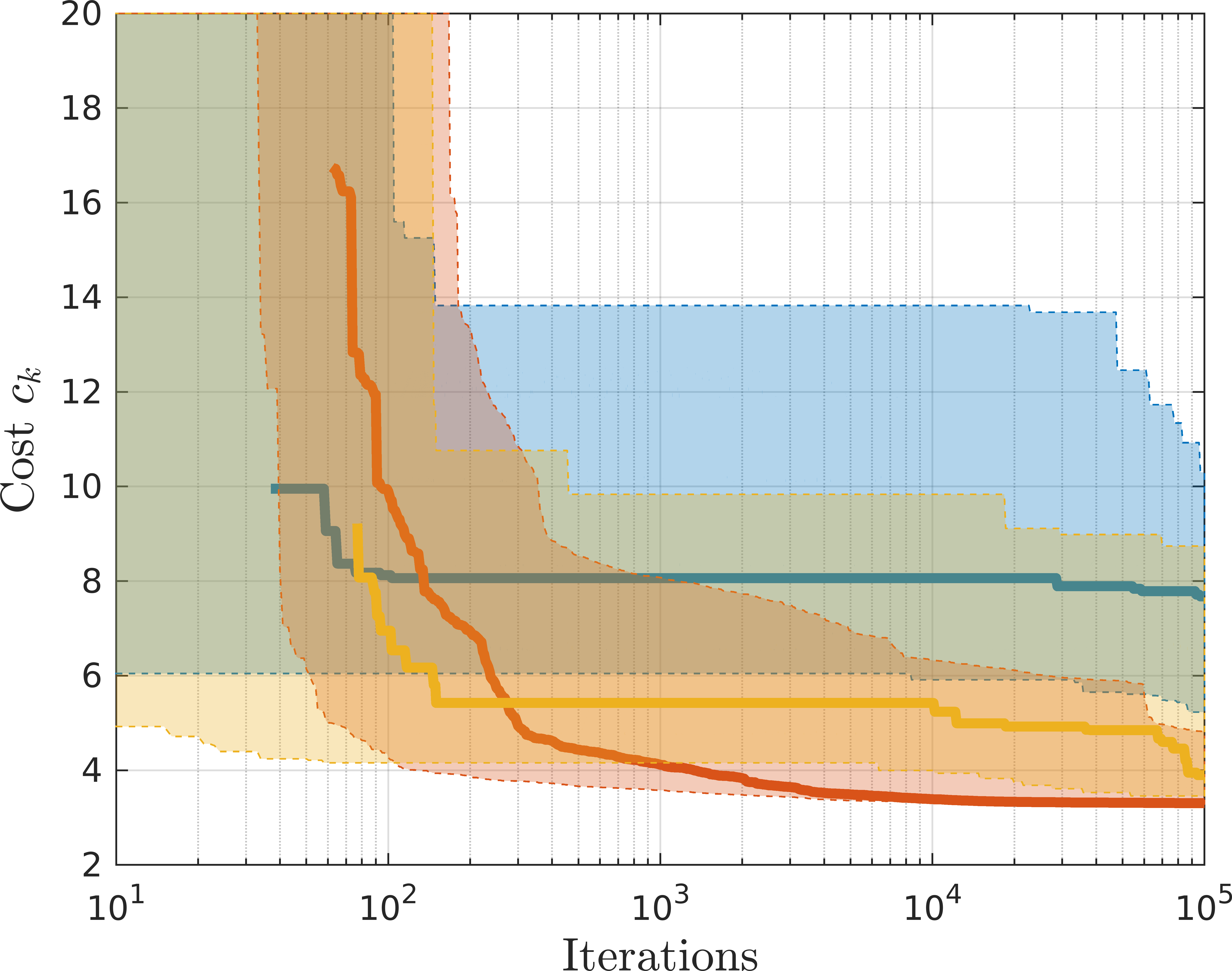} \label{fig: high_dof_comparison_dof12}}%
	\caption{Cost trends over iterations for the 6-dof and 12-dof manipulators for one query. Blue: Informed-RRT$^*$; Orange: Wrap-RRT$^*$; Red: Mixed-strategy RRT$^*$. Solid lines denote the median; shaded regions denote the region between the 10\% and 90\% percentile.
    }
	\label{fig: high_dof_comparison}
\end{figure*}

For an exhaustive comparison, we set up a benchmark as follows. 
Thirty queries are generated randomly (queries for which a direct connection between start and goal exists are discarded). 
The queries are solved with different planning times, between 0.5 and 5 seconds. 
Bounding the maximum planning time instead of the maximum number of iterations has been preferred because the algorithms perform different operations during the iterations. 
Moreover, planning time is more meaningful in practical applications.

Each planner solves each query 30 times for maximum planning times equal to ${0.5, 1.0, 2.0, 5.0}$ seconds. 
The final cost of each query is normalized by an estimate of the minimum cost, obtained by solving the query with a maximum planning time equal to 60 seconds.

The box-plots of  Figure \ref{fig: box-simulation} show that MI-RRT$^*$ has a faster convergence rate as well as a smaller variance compared to both Informed-RRT$^*$ and Wrap-RRT$^*$.
Therefore, the proposed approach finds better and more repeatable paths given the same amount of time. 
This result is emphasized for larger values of $n$, as shown in Figures \ref{fig: box_dof12}-\ref{fig: box_dof18}.

We did not observe significant differences between Wrap-RRT$^*$ and Informed-RRT$^*$, probably because the improvement of the convergence rate is counterbalanced by the computational overload owed to the wrapping procedures, as mentioned in Section \ref{sec: related_works}.

\begin{figure*}[tbp]
	\centering
	\subfloat[][$n=6$]
	{\includegraphics[height=4cm,width=0.32\textwidth]{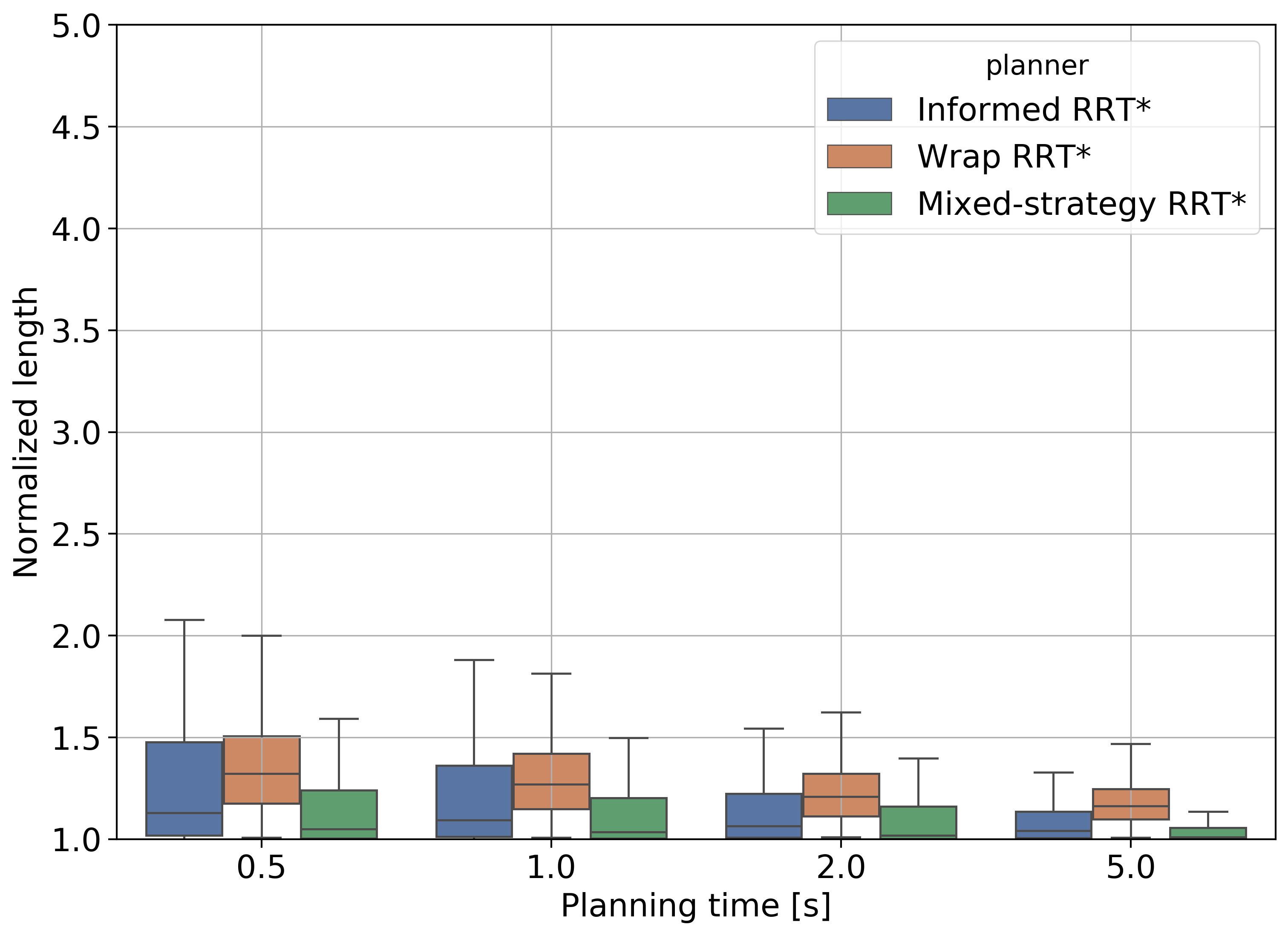} \label{fig: box_dof6}}\,%
	\subfloat[][$n=12$]
	{\includegraphics[height=4cm,width=0.32\textwidth]{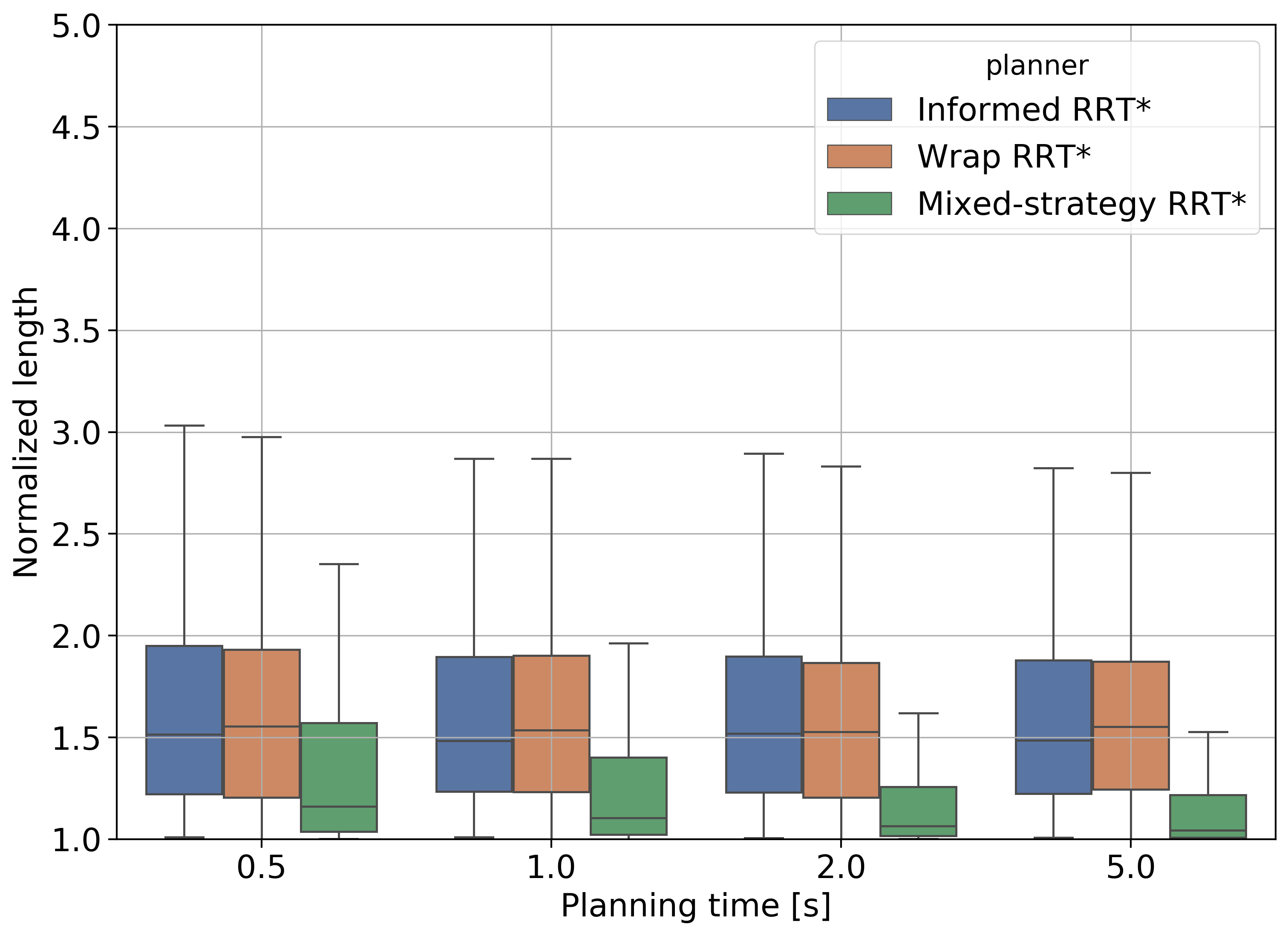} \label{fig: box_dof12}}\,
	\subfloat[][$n=18$]
	{\includegraphics[height=4cm,width=0.32\textwidth]{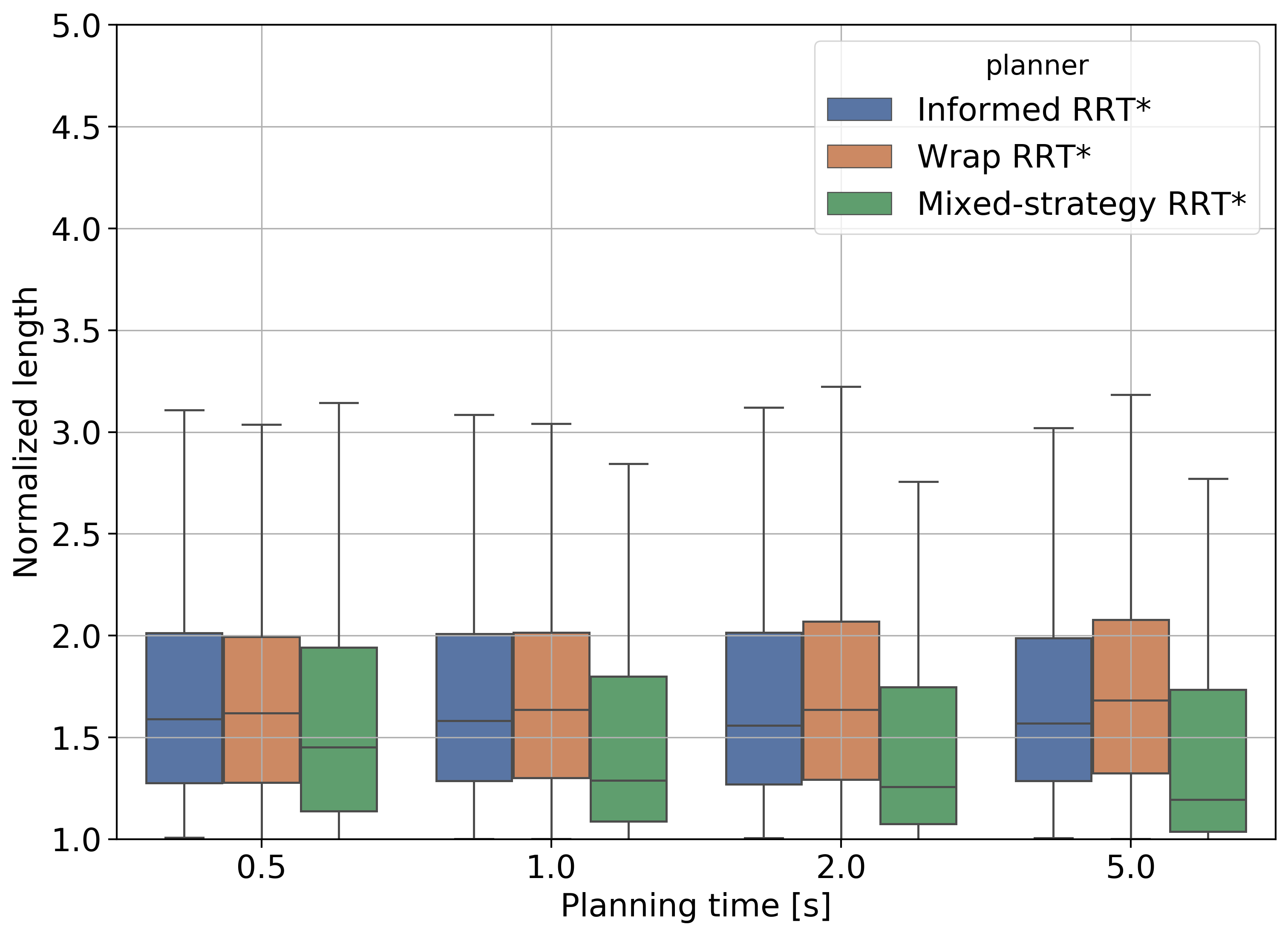} \label{fig: box_dof18}}%
	\caption{Robot manipulators (6-, 12-, 18-dof): Cost trends over planning time for randomized queries. }
	\label{fig: box-simulation}
\end{figure*}

\subsection{Mobile manipulators}
\label{sec: 2d-navigation}

We consider navigation scenarios with mobile manipulators. 
Each robot consists of a 6-degree-of-freedom manipulator mounted on an omnidirectional mobile platform (two linear and one rotational degree of freedom).
In the first case, the robot has to move from one side to the other of a wall with a narrow opening (see Figure \ref{fig: imm-scenario}). 
The problem has at least three homotopy classes: two circumnavigate the wall, and one passes through the narrow passage, requiring the re-configuration of the robot to fit the passage.
We compare our MI-RRT$^*$ with Informed-RRT$^*$ \cite{Gammel-InformedRRT} and Wrap-RRT$^*$ \cite{Kim-Song-wrapping-informed-rrt-2018} over 30 repetitions.
Results are in Figure \ref{fig: imm1}: MI-RRT$^*$ has the best convergence rate, followed by Wrap-RRT$^*$, and Informed-RRT$^*$.

We also consider a second scenario with two mobile manipulators (for a total of 18 degrees of freedom) required to move from one side to the other of a wall with two openings. 
Results are in Figure \ref{fig: imm2}: similarly to the single-robot case, MI-RRT$^*$ has the best convergence rate, followed by Wrap-RRT$^*$, and Informed-RRT$^*$, despite the greater number of iterations required by all methods to solve the problem.

\begin{figure*}[tbp]
	\centering
    \subfloat[][Mobile manipulator setup (single-robot)]
	{\label{fig: imm-scenario}
    \includegraphics[trim = 0cm 0cm 0cm 0cm, height=4cm,width=0.32\textwidth]{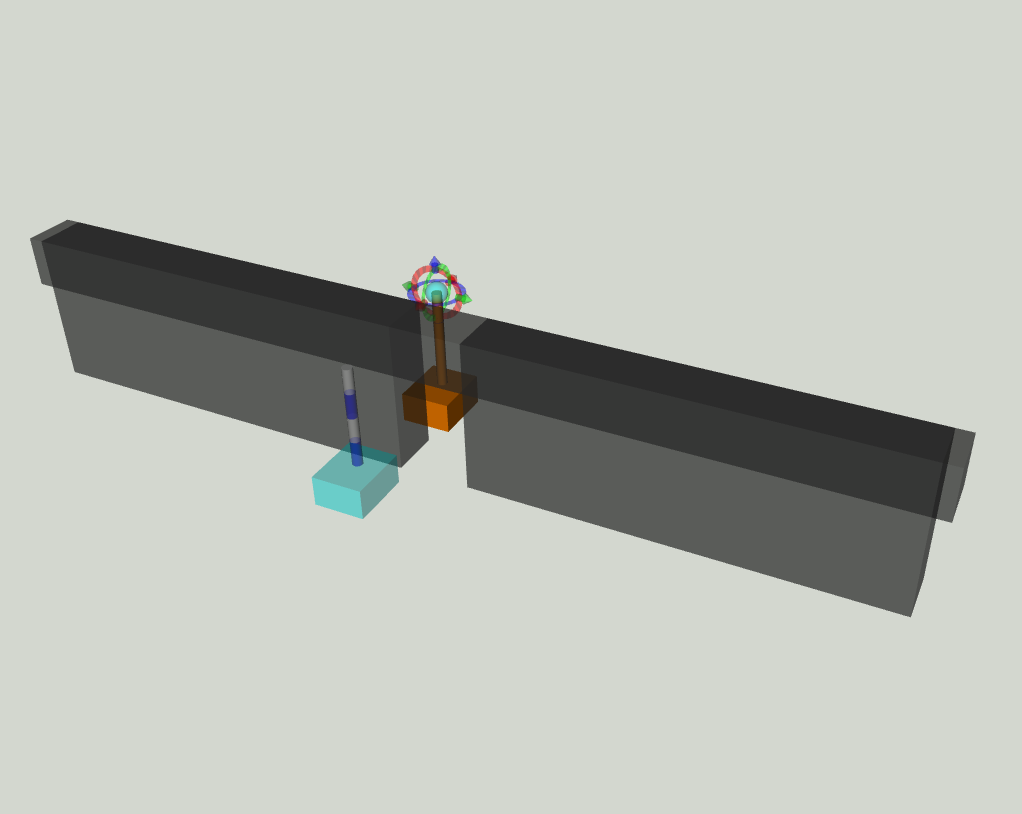} }
    \,%
	\subfloat[][Cost trend (single-robot)]
	{\label{fig: imm1}
    \includegraphics[height=4cm,width=0.32\textwidth]{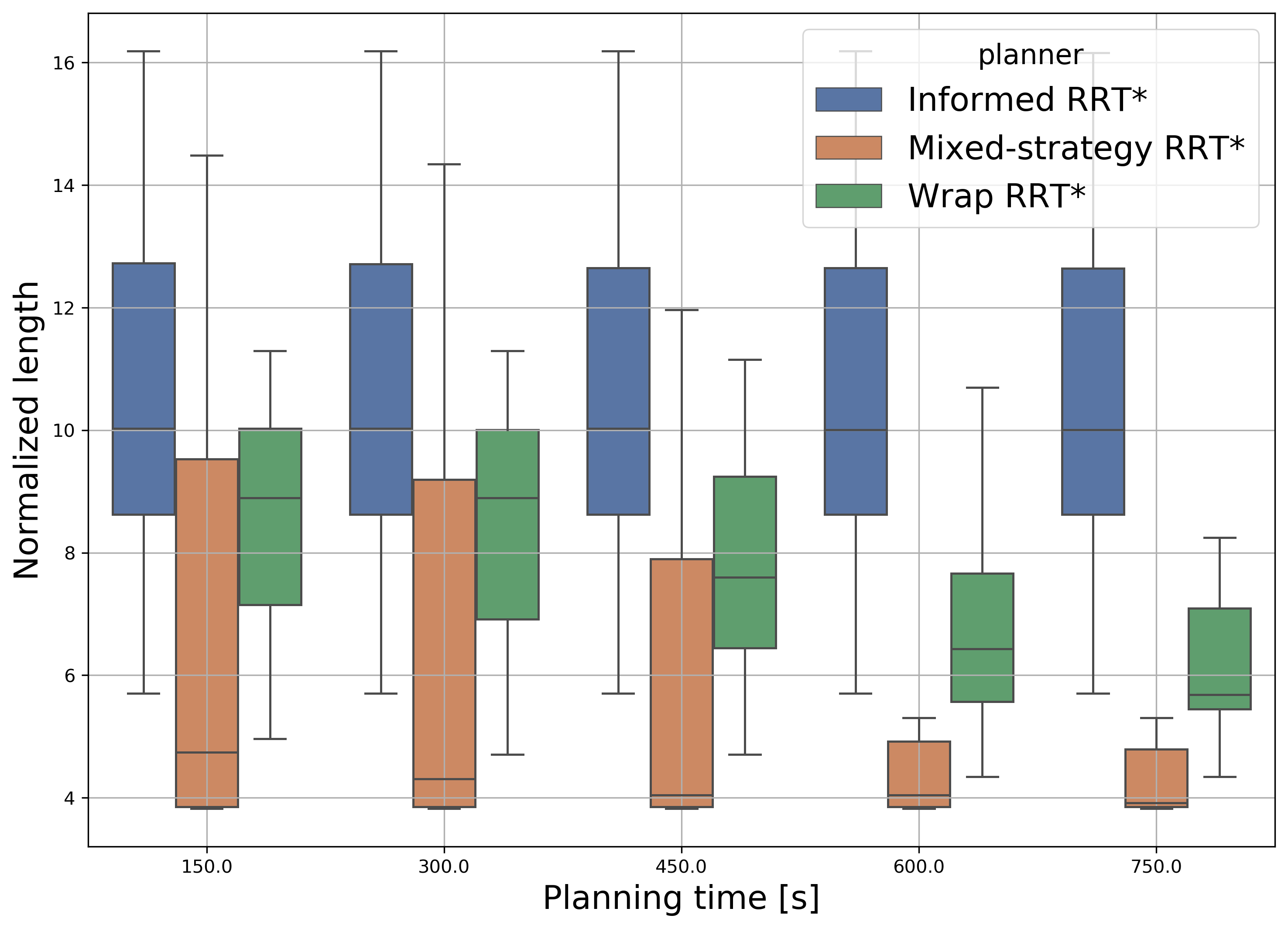} }
    \,%
	\subfloat[][Cost trend (two-robot)]
	{\label{fig: imm2}%
    \includegraphics[height=4cm,width=0.32\textwidth]{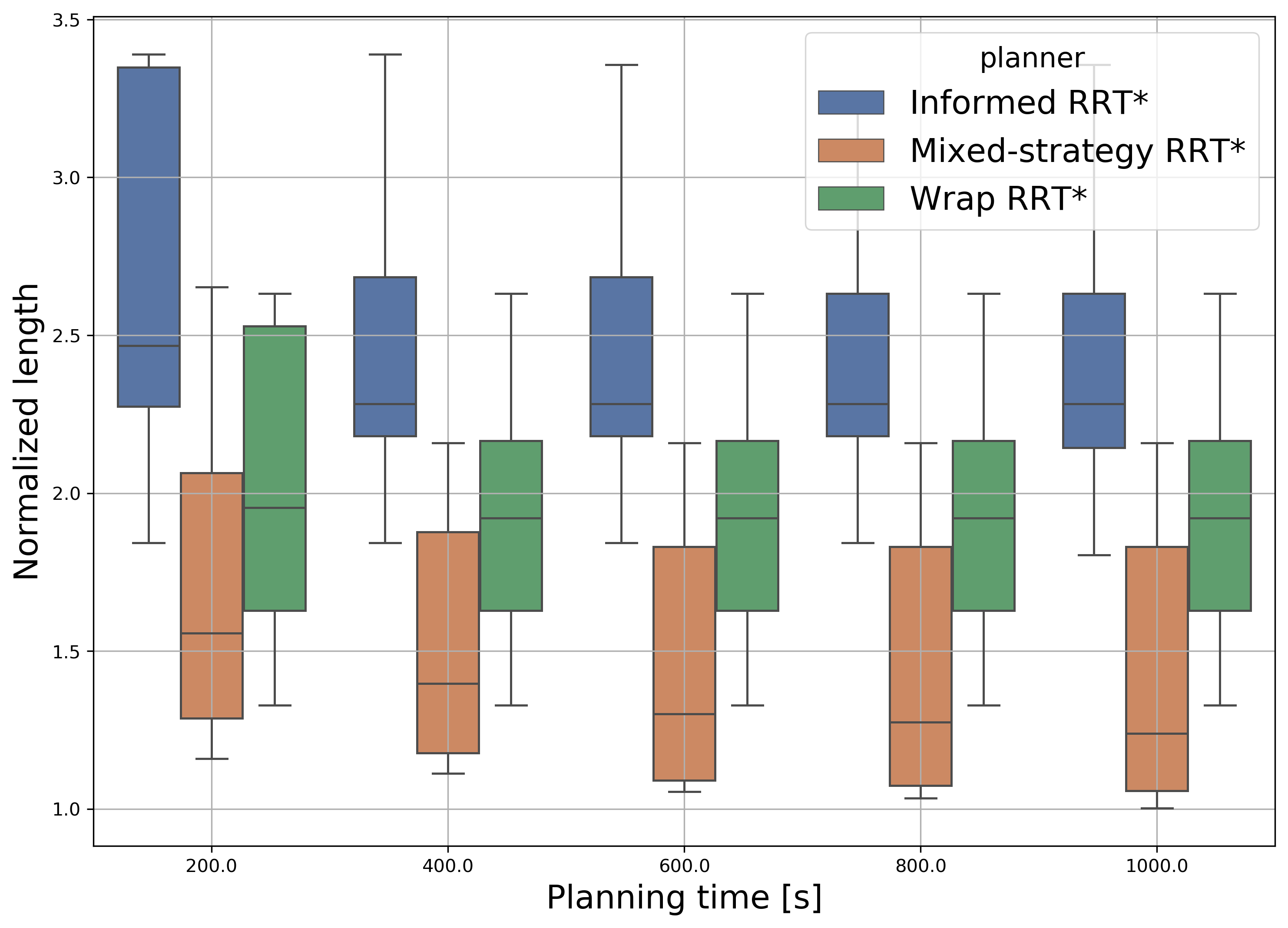}}
	\caption{Mobile manipulators: Cost trends over planning time with one and two robots.}
	\label{fig: imm}
\end{figure*}

\subsection{Real-world case study}
\label{sec: experiments}

\begin{figure*}[tpb!]
  \subfloat[]{\label{fig: cella-sharework.real}\includegraphics[trim = 25cm 0cm 30cm 0cm, clip, angle=0, height=5.5cm,width=0.4\textwidth]{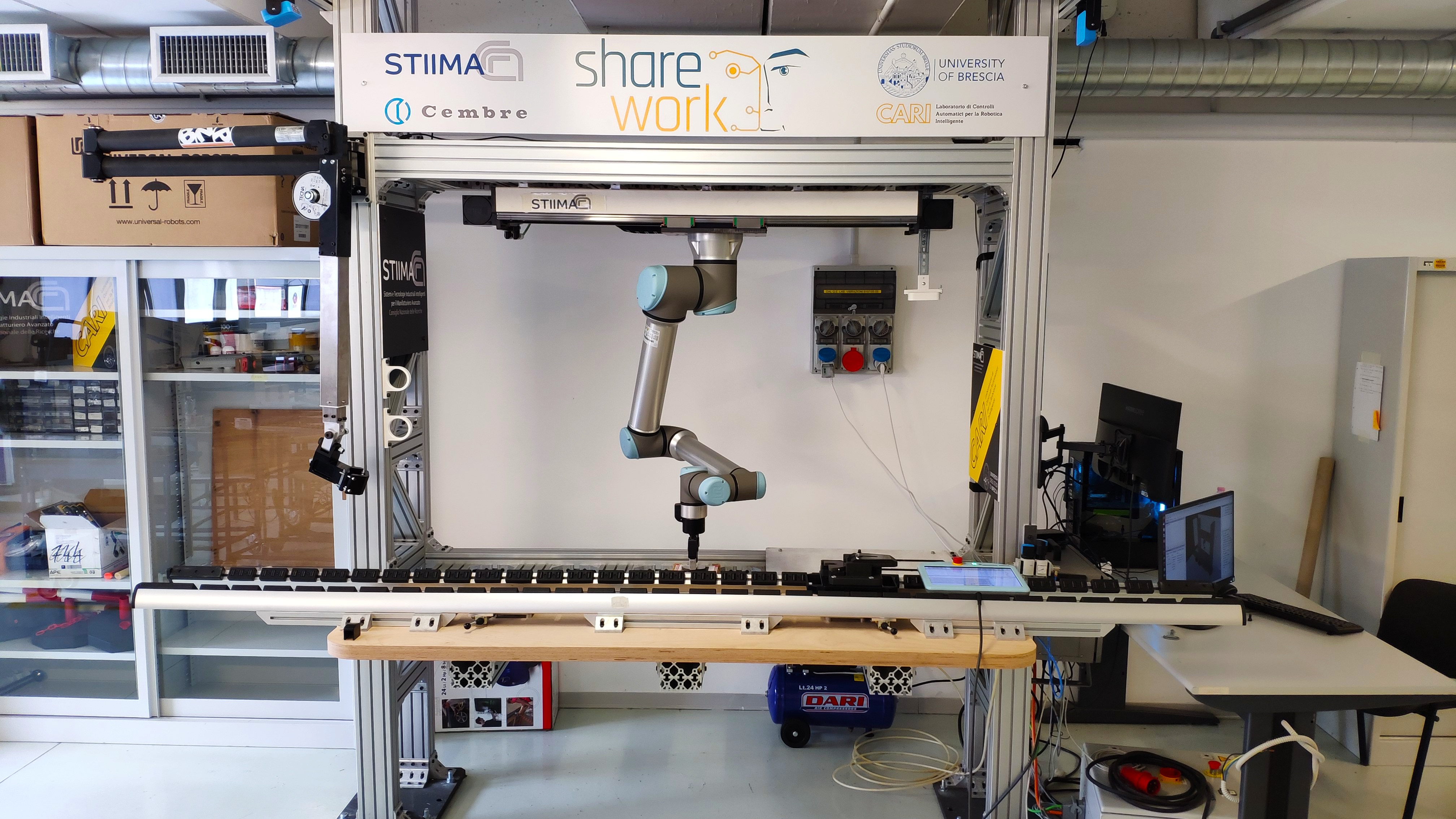}}\hfill
  \subfloat[]{\label{fig: cella-sharework-table}\includegraphics[trim = 12cm 0cm 12cm 0cm, clip, angle=0,height=5.5cm,width=0.2\textwidth]{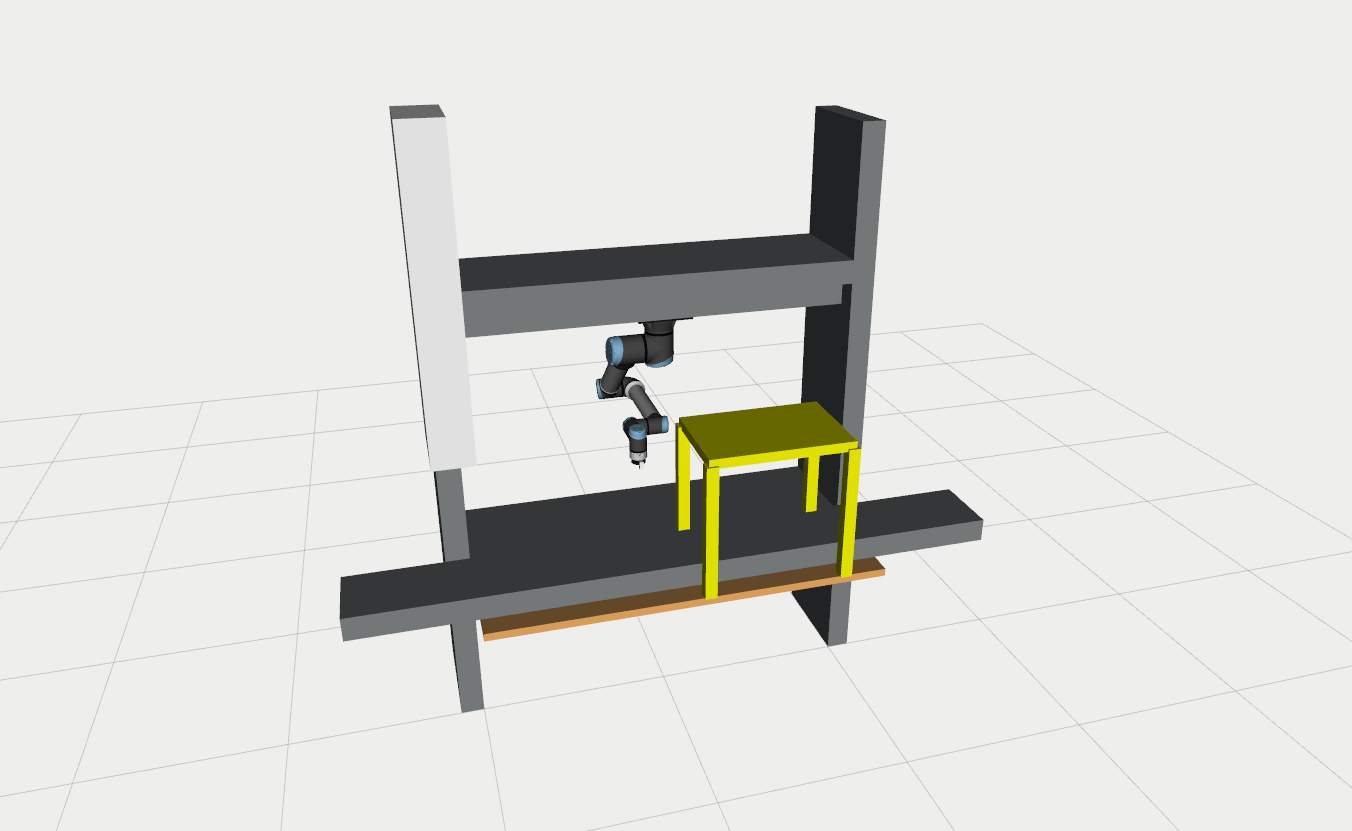}}\hfill
  \subfloat[]{\label{fig: cella-sharework-barrier}\includegraphics[trim = 12cm 0cm 12cm 0cm, clip, angle=0,height=5.5cm,width=0.2\textwidth]{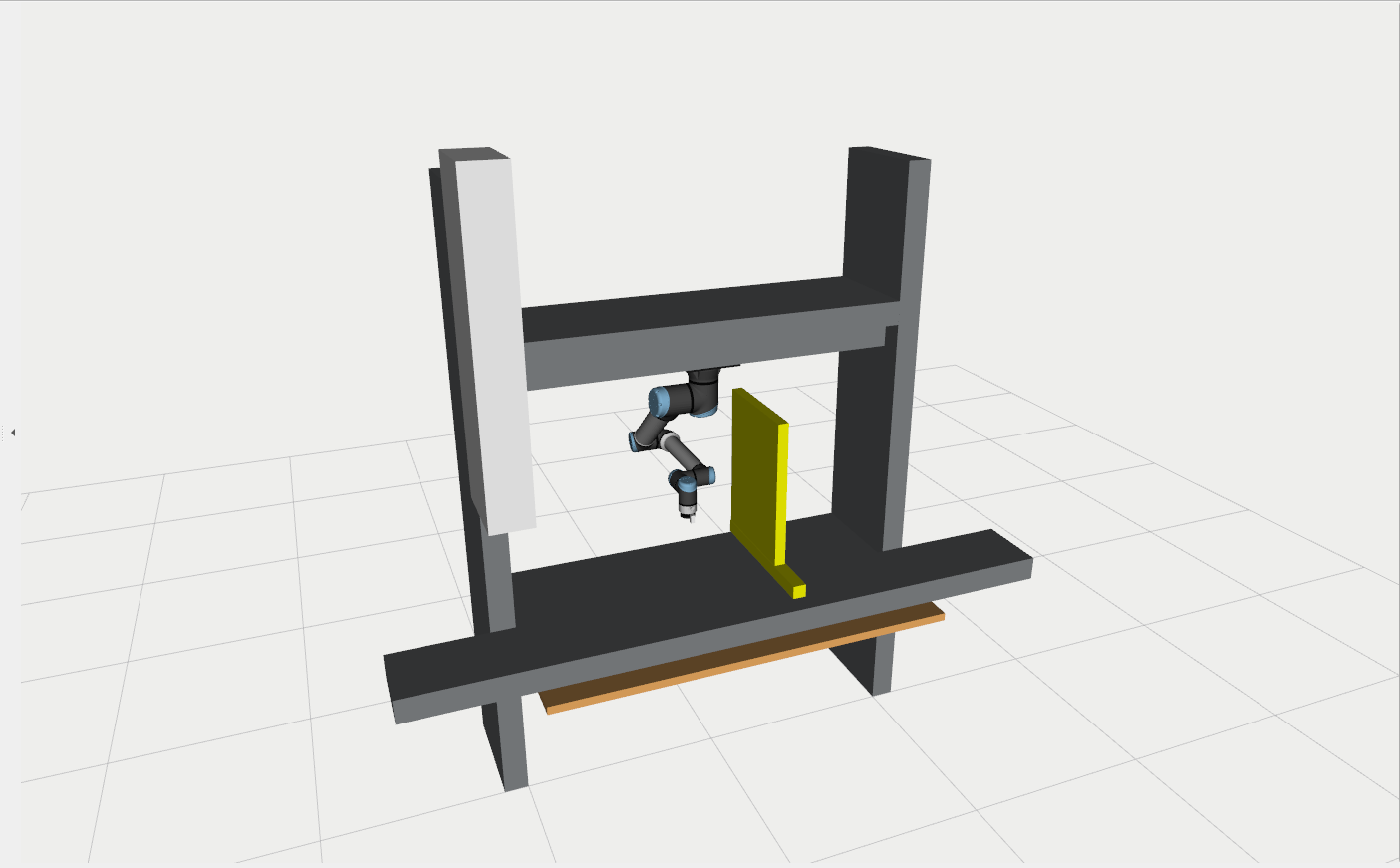}}\\
  \caption{Experimental setup. A Universal Robots UR10e mounted upside down works on the panel in front of it. (a) Actual setup; (b) first experiment: table-shaped obstacle upon placing position; (c) second experiment: barrier with a narrow passage between picking and placing positions.}
    \label{fig: cella-sharework}
\end{figure*}

We validated our algorithm in a manufacturing mock-up cell designed within the EU-funded project \emph{Sharework}. 
The cell consists of a $6$-degree-of-freedom collaborative robot, Universal Robots UR10e, mounted upside down and working on a work table in front of it (Figure \ref{fig: cella-sharework}). 
The proposed motion planner is implemented in C++ within ROS/MoveIt! \cite{Moveit}.
An open-source version of the code is available at \cite{git_mirrt}. 
ROS/MoveIt! runs on an external computer from which it sends the planned trajectory to the robot controller.

The robot is tasked with a sequence of fifty pick-and-place operations.
We consider two experiments. In the first one, a table-shaped obstacle is placed upon the placing goal (Figure \ref{fig: cella-sharework-table}). In the second one, a barrier separates the picking and placing goals (Figure \ref{fig: cella-sharework-barrier}). 
These scenarios simulate realistic machine-tending operations, in which the robot needs to access a confined space. 
From a planning perspective, they introduce narrow passages, complicating the planning problem. 
For example, in the barrier experiment, the shortest path passes through the narrow space below the barrier, close to the table surface.

Figure \ref{fig: box-experiment} compares the performance of MI-RRT$^*$, Informed-RRT$^*$, and Wrap-RRT$^*$ with different planning times, for the table and the barrier experiments. 
Similar to Sections \ref{sec: manipulators} and \ref{sec: 2d-navigation}, MI-RRT$^*$ has a faster convergence speed in both experiments. 
In the table experiment, MI-RRT$^*$ reduces the planning time up to $-34\%$ and $-13\%$ compared to Informed-RRT$^*$ and Wrap-RRT$^*$.
In the barrier experiment, MI-RRT$^*$ reduces the planning time up to $-37\%$ and $-18\%$ compared to Informed-RRT$^*$ and Wrap-RRT$^*$.
Note that, contrary to the simulations, Wrap-RRT$^*$ showed a significant improvement compared to Informed-RRT$^*$.
This suggests that the advantages of Wrap-RRT$^*$ are problem-dependent.

Overall, MI-RRT$^*$ finds better solutions with the same maximum planning time.
As a matter of example, Figure \ref{fig: box-experiment-long-time} shows the continuous trend of the normalized costs for the barrier experiment. 
The key result is that MI-RRT$^*$ approaches the best cost faster in the initial phase.
For example, after 10 seconds, MI-RRT$^*$ reaches $1.4 c^*$, while Informed-RRT$^*$ and Wrap-RRT$^*$ reach $1.6 c^*$ and $1.9 c^*$, respectively; after 60 seconds, MI-RRT$^*$ reaches $1.2 c^*$, while Informed-RRT$^*$ and Wrap-RRT$^*$ reach $1.35 c^*$ and $1.55 c^*$, respectively.

\begin{figure*}[tbp]
	\centering
	\subfloat[][table-shaped obstacle]
	{\includegraphics[height=4cm,width=0.3\textwidth]{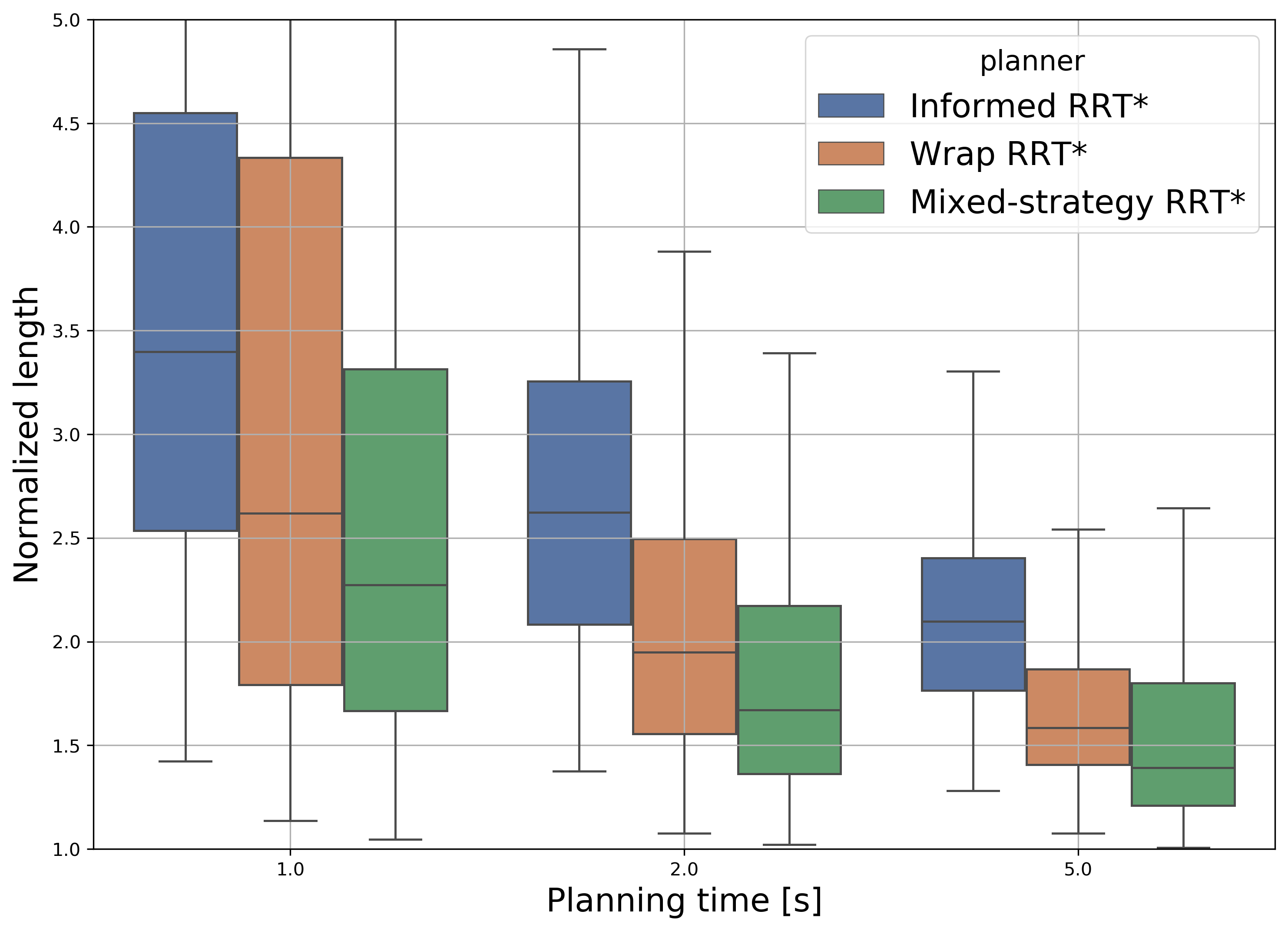} \label{fig: box_table}}\quad%
	\subfloat[][barrier obstacle]
	{\includegraphics[height=4cm,width=0.3\textwidth]{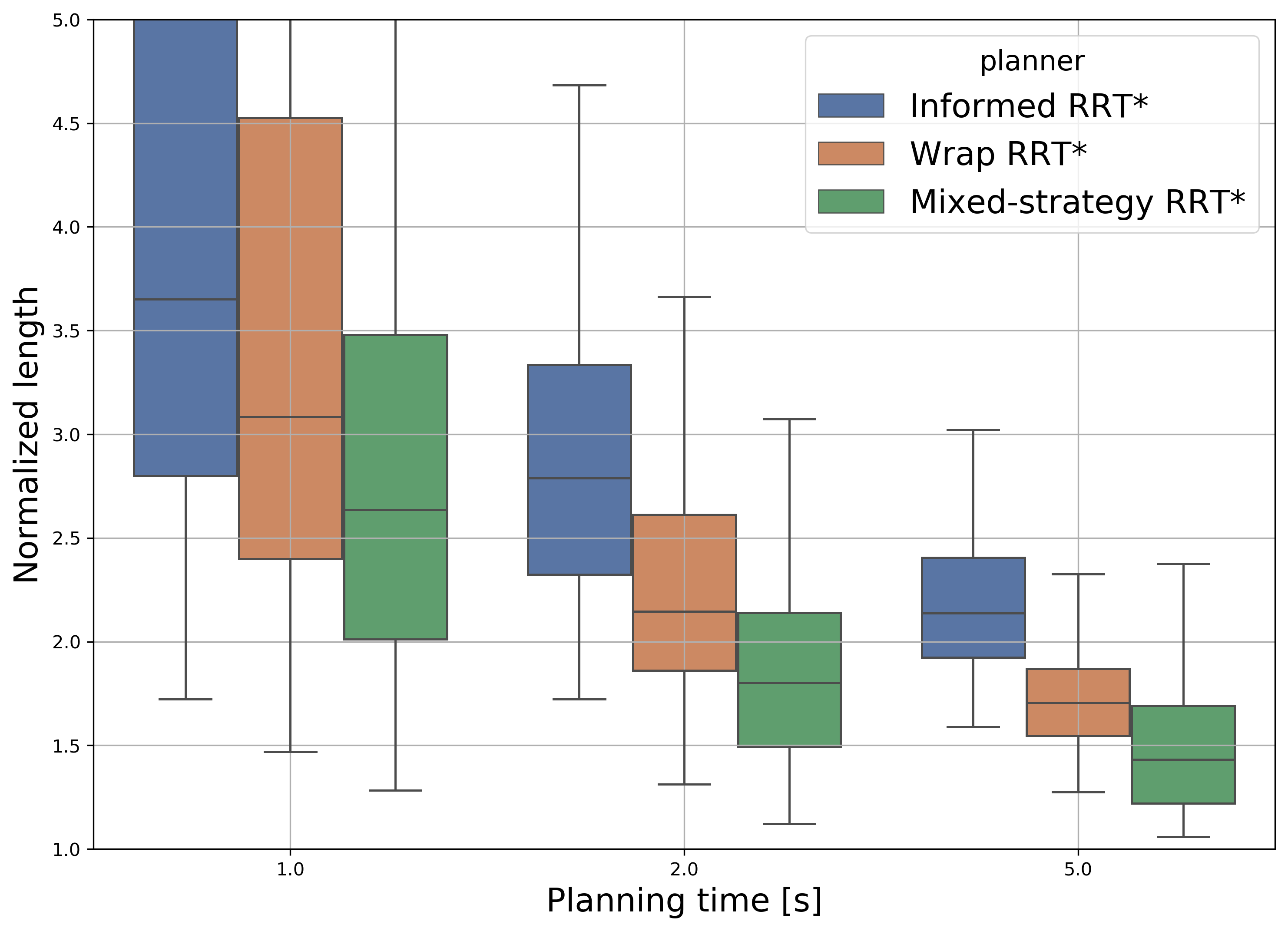} \label{fig: box_barrier}}\quad%
    \subfloat[][barrier obstacle -- long planning time]
	{\includegraphics[height=4cm,width=0.3\textwidth]{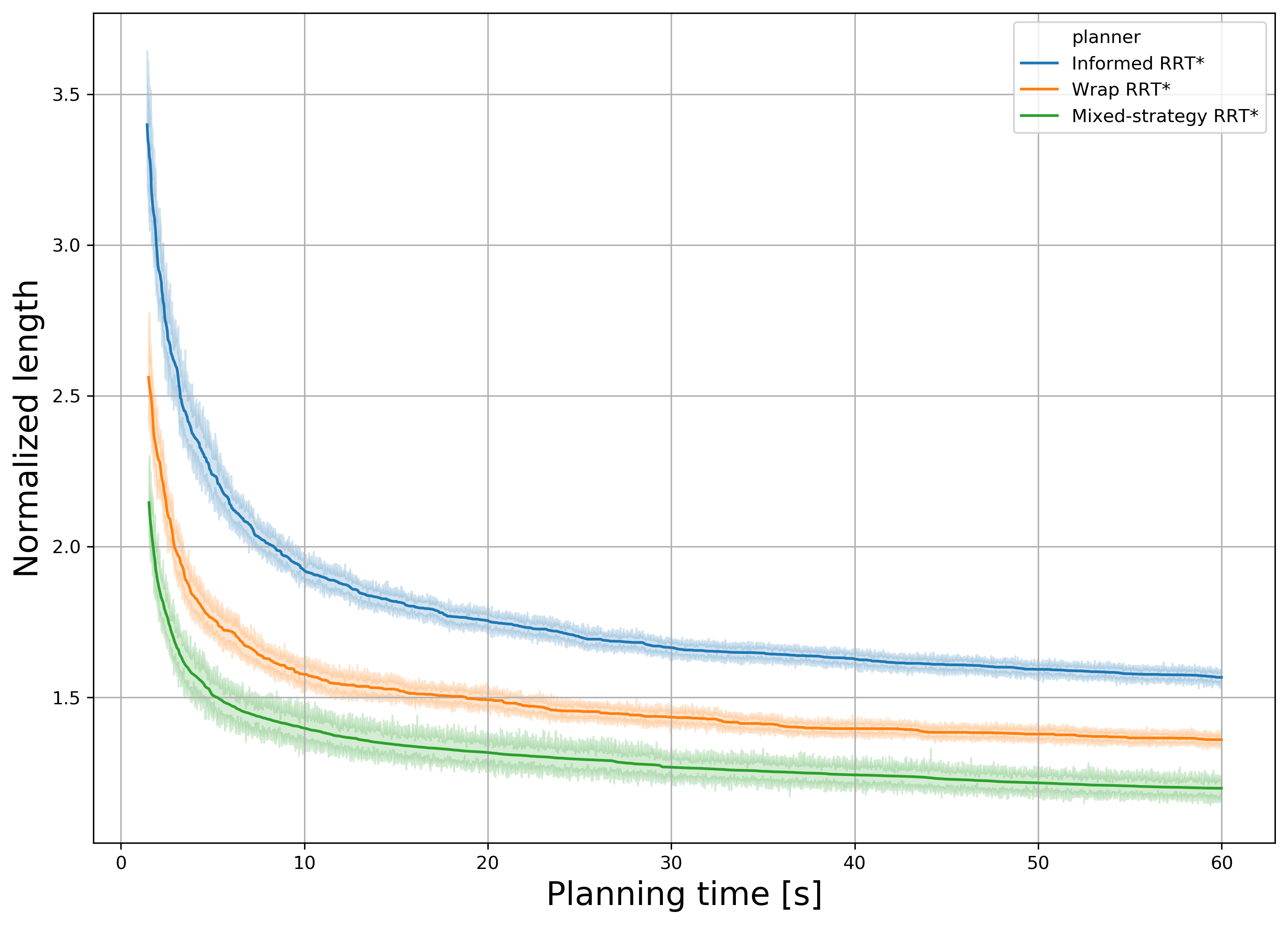} \label{fig: box-experiment-long-time}}\qquad%
	\caption{Experimental results: cost trends over planning time. 
    }
	\label{fig: box-experiment}
\end{figure*}

\section{Conclusions}
\label{sec: conclusions}

Comparisons with state-of-the-art methods highlight the effectiveness of the proposed method in improving the convergence speed, especially in high-dimensional problems. 
The method is implemented in a manufacturing-oriented case study, where the robot is tasked with a sequence of pick-and-place operations. 
Results show that the proposed planner converges quicker to the optimal solution, allowing for shorter planning latencies in online applications.

An open-source implementation of the algorithm is available at \cite{git_mirrt}. 
The algorithm is implemented in C++ and is fully compatible with ROS/MoveIt! \cite{Moveit}.
Examples of usage and benchmarking are also available at \cite{high_dof_cell_github}.

\begin{appendices}
	\section{Effect of the tuning parameters} \label{sec: Appendix}

We analyze the effect of the parameters $R_0$ and $\nu$ used in Algorithm \ref{alg: algorithm}.
To do so, we use an illustrative example consisting of a narrow-passage problem with one local minimum $c_\mathrm{local}$ and one global minimum $c_\mathrm{global}$. 
Different cardinalities of the configuration spaces are tested. 
We run  200 queries for each parameter set; each time, the algorithm runs for $10^6$ iterations with an early stop condition if the cost $c_k$ satisfies the condition $c_k<1.01 c_\mathrm{global}$. 
Although this analysis is limited to an illustrative example, the results can serve as tuning guidelines for parameters $R_0$ and $\nu$, as demonstrated in Section \ref{sec: results}.

\subsection{Narrow-passage example}
\label{sec: narrow-passage}

We consider the configuration space
\begin{equation}
    X=\left\{ x\in X \subseteq \mathbb R^n \;\big\vert\;  -5 \leq x_i \leq 5
    \right\}
\end{equation}
and an hollow hyper-spherindrical obstacle:
\begin{equation}\footnotesize
    X_{\mathrm{obs}}=\left\{ x\in X \;\big\vert\;  |x_1| \leq \frac{l_{c}}{2}, \;
     r_{c1}^2 \leq  \sum_{i=2}^{n} x_i^2  \leq r_{c2}^2
    \right\}
\end{equation}
where $l_c=1$ is length of the hyper-spherinder, $r_{c2}=1$ is the external radius, and $r_{c1}$ is the cavity radius.
The cavity radius is such that the ratio between the volume of the cavity and that of the external cylinder is equal to 0.5 for all values of $n$.
The starting and goal points are set equal to 
\[\small
x_{\mathrm{start}}=\left[-0.6l_c,a,0,...,0\right],\, x_{\mathrm{goal}}=\left[0.6l_c,a,0,...,0\right]
\]
with
\[
a=\frac{r_{c2}+3r_{c1}}{4}
\]
The problem has a local and a global minimum:
\[
\begin{aligned}
&c_\mathrm{local}=l_c+2\sqrt{(0.1l_c)^2+(r_{c1}-a)^2 }\\
&c_\mathrm{global}=l_c+2\sqrt{(0.1l_c)^2+(r_{c2}-a)^2 }.   
\end{aligned}
\]
An example of the planning problem for $n=2$ is in Figure \ref{fig: sferindro}.
\begin{figure}
    \centering
    \includegraphics[height=2.6cm,width=0.6\columnwidth]{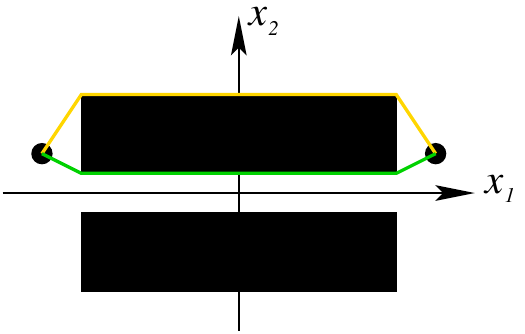}
    \caption{Narrow-passage example of Section \ref{sec: narrow-passage} for $n=2$. 
    The planning problem has one global optimum (green line) and one local optimum (yellow line). For the sake of readability, axis scales are not equal.}
    \label{fig: sferindro}
\end{figure}

\subsection{Effect of $R_0$}
\label{sec: narrow_pass_radius}

$R_0$ should be adequately small compared to the current cost. 
We run tests for $R_0 \in [10^{-3},\;10^{-1}]$ and $\nu = 0.999$.
For each test, we count the number of iterations needed to reach $c_k \leq 1.01 c_\mathrm{global}$. 
The 90\%-percentile, computed over 200 tests, is used as the performance index.
Figure \ref{fig: radius_narrow_pass_results} shows the performance obtained for different values of  $R_0=\frac{R}{c_k-u}$ and $n$. 
Values around $0.02$ provide the best results, while the local optimization is less effective with higher values. Smaller values of $R_0$ provide minimal improvements to the cost function.

\subsection{Effect of forgetting factor $\nu$}
\label{sec: narrow_pass_forgetting_factor}

The forgetting factor allows smoothing the switching between the two sampling strategies by averaging out the cost changes over multiple iterations. 
Figure \ref{fig: forgetting_factor_narrow_pass_results} shows the relation between the forgetting factor $\nu$ and the number of iterations required to reach $c_k=1.01 c_\mathrm{global}$ (90\%-percentile), the tube radius $R_0$ was set equal to $0.02$ according to Section \ref{sec: narrow_pass_radius}. 
If $\nu>0.999$, results do not vary significantly; however, selecting values of $\nu$ too close to 1 could lead the solver to get stuck in local minima for many iterations. $\nu=0.999$ is a reasonable value for most cases in our experience. 

\begin{figure*}[tbp]
	\centering
	\subfloat[a][Effect of radius $R_0$]
	{\includegraphics[height=0.2\textheight]{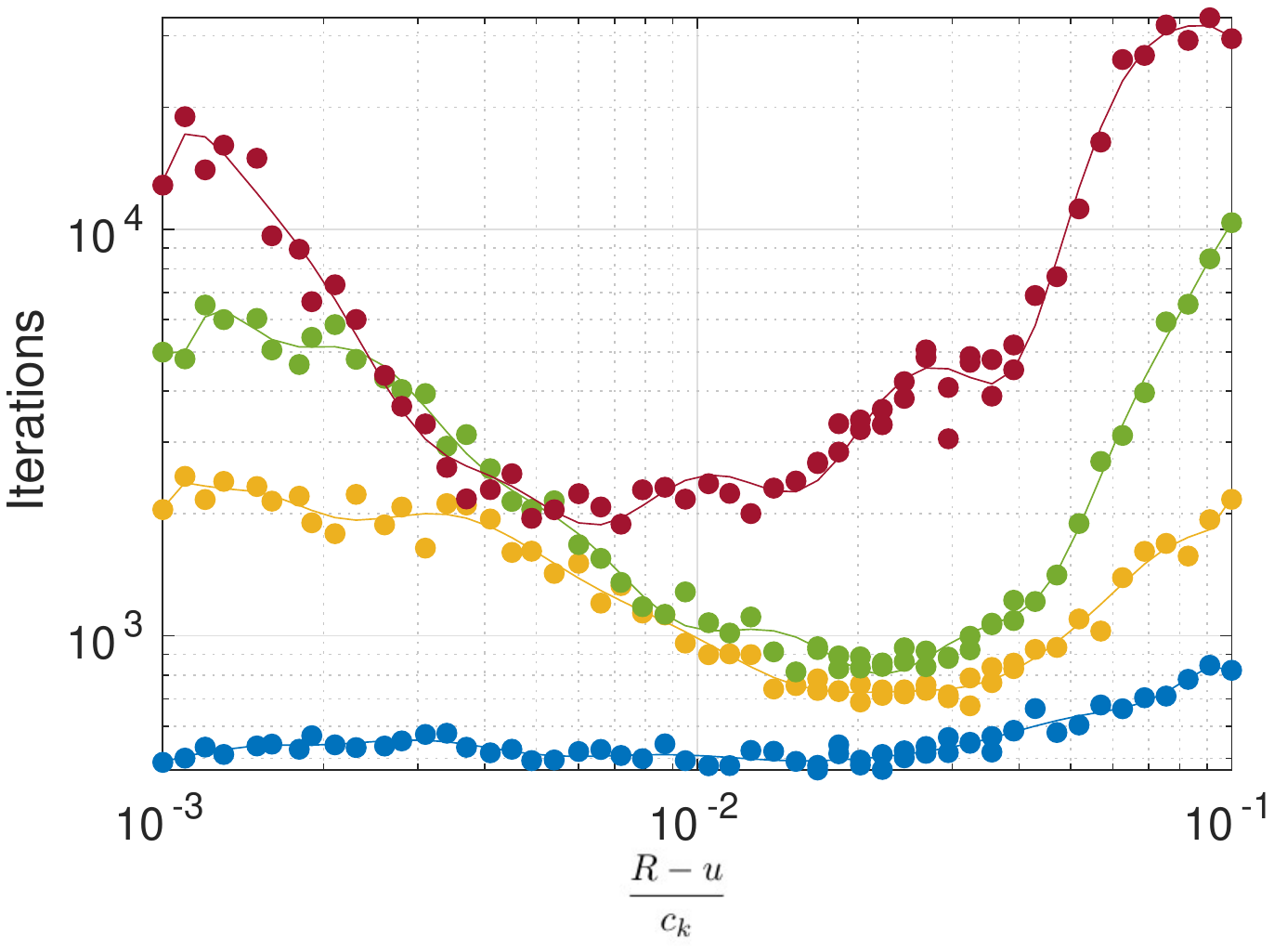} \label{fig: radius_narrow_pass_results}}\quad%
	\subfloat[b][Effect of forgetting factor $\nu$]
	{\includegraphics[height=0.2\textheight]{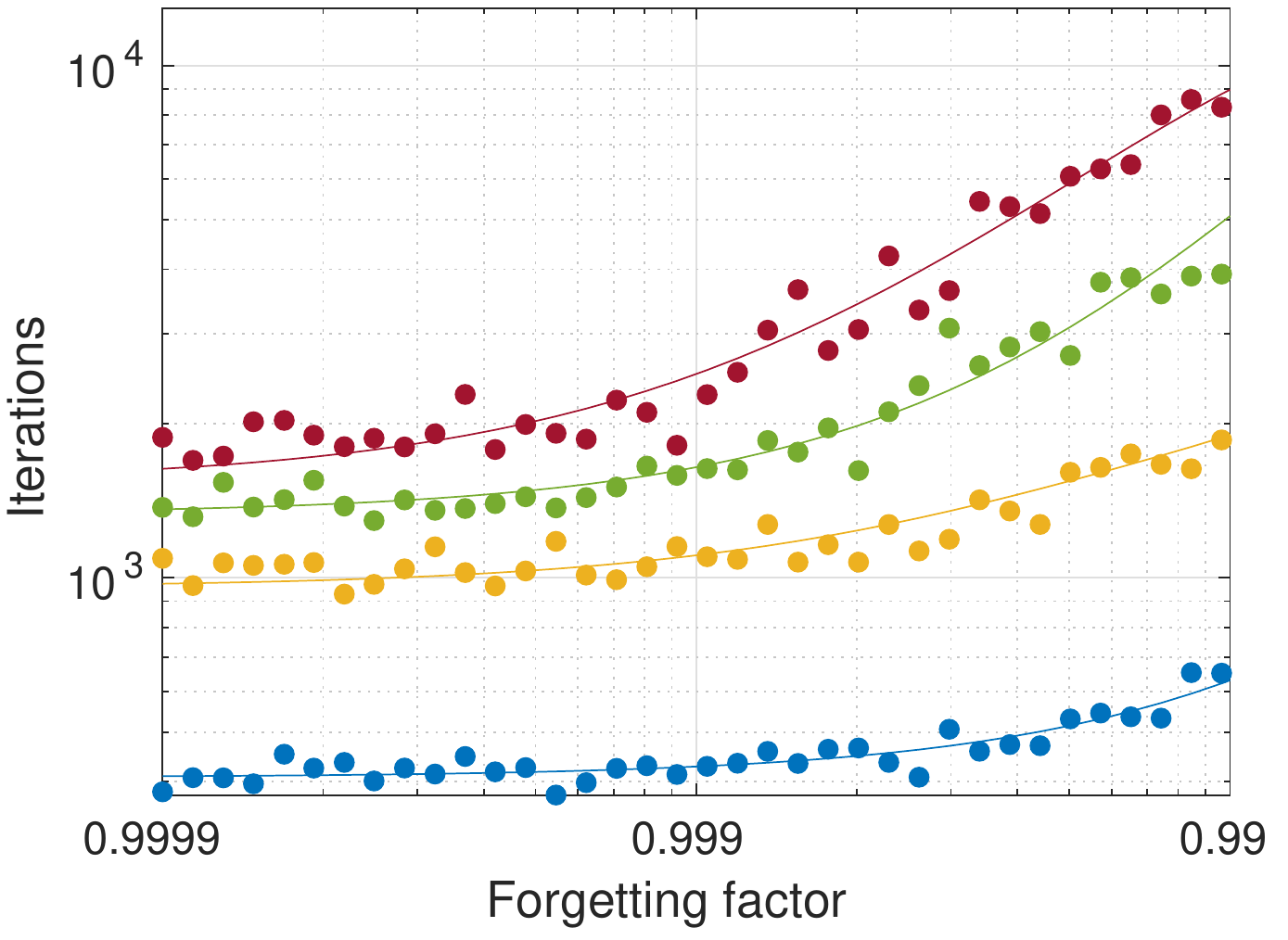}\label{fig: forgetting_factor_narrow_pass_results}}\,\,%
	\caption{Effect of the parameters $R_0$ and $\nu$. Iterations needed to reach $c_k=1.01 c_\mathrm{global}$ (90\%-percentile from 200 tests) for different values of $R_0$ and $\nu$. Blue line: $n=2$; orange line: $n=3$; green line: $n=4$, red line: $n=7$.}
	\label{fig: role-of-parameters}
\end{figure*}

\end{appendices}

\ifnum\value{template}=1
    \backmatter
    
    \section*{Declarations}
    
    \bmhead{Ethical Statement}
    
    The authors declare that the following is fulfilled: 
    1) This material is the authors' original work, which has not been previously published elsewhere;
    2) The paper is not currently being considered for publication elsewhere;
    3) The paper reflects the authors' research and analysis truthfully and completely;
    4) The paper properly credits the meaningful contributions of co-authors and co-researchers;
    5) All authors have been personally and actively involved in substantial work leading to the paper, and will take public responsibility for its content.
    
    \bmhead{Authors' contribution}
    M.F. and M.B. devised the methodology and wrote the main manuscript text. All authors contributed to implementing the software and conceiving the experiments. All authors read and reviewed the manuscript.
    
    \bmhead{Funding}
    This study was partially carried out within the MICS (Made in Italy – Circular and Sustainable) Extended Partnership and received funding from Next-Generation EU (Italian PNRR – M4 C2, Invest 1.3 – D.D. 1551.11-10-2022, PE00000004). CUP MICS D43C22003120001.
\fi

\bibliographystyle{IEEEtran}
\bibliography{reference_stiima}

\begin{thebibliography}{10}
\providecommand{\url}[1]{#1}
\csname url@rmstyle\endcsname
\providecommand{\newblock}{\relax}
\providecommand{\bibinfo}[2]{#2}
\providecommand\BIBentrySTDinterwordspacing{\spaceskip=0pt\relax}
\providecommand\BIBentryALTinterwordstretchfactor{4}
\providecommand\BIBentryALTinterwordspacing{\spaceskip=\fontdimen2\font plus
\BIBentryALTinterwordstretchfactor\fontdimen3\font minus
  \fontdimen4\font\relax}
\providecommand\BIBforeignlanguage[2]{{%
\expandafter\ifx\csname l@#1\endcsname\relax
\typeout{** WARNING: IEEEtran.bst: No hyphenation pattern has been}%
\typeout{** loaded for the language `#1'. Using the pattern for}%
\typeout{** the default language instead.}%
\else
\language=\csname l@#1\endcsname
\fi
#2}}

\bibitem{A_star}
P.~E. Hart, N.~J. Nilsson, and B.~Raphael, ``A formal basis for the heuristic
  determination of minimum cost paths,'' \emph{IEEE Transactions on Systems
  Science and Cybernetics}, vol.~4, no.~2, pp. 100--107, 1968.

\bibitem{graph_based}
M.~Likhachev, D.~Ferguson, G.~Gordon, A.~Stentz, and S.~Thrun, ``Anytime search
  in dynamic graphs,'' \emph{Artificial Intelligence}, vol. 172, no.~14, pp.
  1613--1643, 2008.

\bibitem{Lavalle-RRT}
S.~LaValle, ``Rapidly-exploring random trees : a new tool for path planning,''
  \emph{The annual research report}, 1998.

\bibitem{Latombe-EST}
D.~Hsu, J.-C. Latombe, and R.~Motwani, ``Path planning in expansive
  configuration spaces,'' in \emph{Proceedings of International Conference on
  Robotics and Automation}, vol.~3, 1997, pp. 2719--2726.

\bibitem{Kavraki-PRM}
L.~E. {Kavraki}, P.~{Svestka}, J.~. {Latombe}, and M.~H. {Overmars},
  ``Probabilistic roadmaps for path planning in high-dimensional configuration
  spaces,'' \emph{IEEE Transactions on Robotics and Automation}, vol.~12,
  no.~4, pp. 566--580, 1996.

\bibitem{karaman-RRT*}
S.~Karaman and E.~Frazzoli, ``Sampling-based algorithms for optimal motion
  planning,'' \emph{The International Journal of Robotics Research}, vol.~30,
  no.~7, pp. 846--894, 2011.

\bibitem{Gammel-InformedRRT}
J.~D. Gammell, T.~D. Barfoot, and S.~S. Srinivasa, ``Informed sampling for
  asymptotically optimal path planning,'' \emph{IEEE Transactions on Robotics},
  vol.~34, no.~4, pp. 966--984, 2018.

\bibitem{git_mirrt}
``{MI-RRT$^*$: a ROS-MoveIt! plugin},''
  \url{https://github.com/JRL-CARI-CNR-UNIBS/cari\_motion\_planning}.

\bibitem{Urmson2003}
C.~Urmson and R.~Simmons, ``Approaches for heuristically biasing {RRT}
  growth,'' in \emph{Proceedings of the IEEE International Conference on
  Intelligent Robots and Systems}, 2003, pp. 1178--1183.

\bibitem{Amato:resample}
S.~Rodriguez, S.~Thomas, R.~Pearce, and N.~M. Amato, ``Resampl: A
  region-sensitive adaptive motion planner,'' in \emph{Algorithmic Foundation
  of Robotics VII}, 2008, pp. 285--300.

\bibitem{Salzman2013}
O.~Salzman and D.~Halperin, ``{Asymptotically near-optimal RRT for fast,
  high-quality, motion planning},'' \emph{IEEE Transactions on Robotics},
  vol.~32, no.~3, pp. 473--483, 2013.

\bibitem{Shan2014}
Y.~X. Shan, B.~J. Li, Jian-Zhou, and Yue-Zhang, ``An approach to speed up
  {RRT},'' in \emph{Proceedings of the IEEE Intelligent Vehicles Symposium},
  2014, pp. 594--598.

\bibitem{Ge2016}
J.~Ge, F.~Sun, and C.~Liu, ``{RRT-GD: An efficient rapidly-exploring random
  tree approach with goal directionality for redundant manipulator path
  planning},'' in \emph{Proceedings of the IEEE International Conference on
  Robotics and Biomimetics}, 2016, pp. 1983--1988.

\bibitem{SantanaCorreia2018}
A.~D. {Santana Correia}, E.~O. Freire, S.~Kamarry, {\'{E}}.~{\'{A}}.~N.
  Carvalho, and L.~Molina, ``{The polarized RRT-edge approach},'' in
  \emph{Proceedings of the Latin American Robotics Symposium}, 2018, pp.
  277--282.

\bibitem{Yu2019}
H.~Yu, W.~Lu, D.~Liu, Y.~Han, and Q.~Wu, ``Speeding up gaussian belief space
  planning for underwater robots through a covariance upper bound,'' \emph{IEEE
  Access}, vol.~7, pp. 121\,961--121\,974, 2019.

\bibitem{Francis-bayesian-rrt}
T.~Lai, P.~Morere, F.~Ramos, and G.~Francis, ``Bayesian local sampling-based
  planning,'' \emph{IEEE Robotics and Automation Letters}, 2020.

\bibitem{Faroni_RAL2023}
M.~Faroni and D.~Berenson, ``Motion planning as online learning: A multi-armed
  bandit approach to kinodynamic sampling-based planning,'' \emph{IEEE Robotics
  and Automation Letters}, vol.~8, no.~10, pp. 6651--6658, 2023.

\bibitem{Kim-Song-wrapping-informed-rrt-2015}
M.-C. Kim and J.-B. Song, ``Informed {RRT*} towards optimality by reducing size
  of hyperellipsoid,'' in \emph{Proceedings of the IEEE International
  Conference on Advanced Intelligent Mechatronics}, 2015, pp. 244--248.

\bibitem{Kim-Song-wrapping-informed-rrt-2018}
------, ``{Informed {RRT*} with improved converging rate by adopting wrapping
  procedure},'' \emph{Intelligent Service Robotics}, vol.~11, pp. 53--–60,
  2018.

\bibitem{hauer-DRRT}
F.~Hauer and P.~Tsiotras, ``Deformable rapidly-exploring random trees.'' in
  \emph{Robotics: Science and Systems}, 2017.

\bibitem{Gammell-RABIT}
S.~Choudhury, J.~D. Gammell, T.~D. Barfoot, S.~S. Srinivasa, and S.~Scherer,
  ``Regionally accelerated batch informed trees ({RABIT*}): A framework to
  integrate local information into optimal path planning,'' in
  \emph{Proceedings of the IEEE International Conference on Robotics and
  Automation}, 2016, pp. 4207--4214.

\bibitem{Gammell2015-bit}
J.~D. Gammell, S.~S. Srinivasa, and T.~D. Barfoot, ``{Batch Informed Trees
  (BIT*): Sampling-based optimal planning via the heuristically guided search
  of implicit random geometric graphs},'' in \emph{Proceedings of the IEEE
  International Conference on Robotics and Automation}, 2015, pp. 3067--3074.

\bibitem{ratliff-chomp}
N.~Ratliff, M.~Zucker, J.~A. Bagnell, and S.~Srinivasa, ``Chomp: Gradient
  optimization techniques for efficient motion planning,'' in \emph{Proceddings
  of the IEEE International Conference on Robotics and Automation}, 2009, pp.
  489--494.

\bibitem{Gammell-AIT}
M.~P. Strub and J.~D. Gammell, ``Adaptively informed trees ({AIT*}): Fast
  asymptotically optimal path planning through adaptive heuristics,'' in
  \emph{Proceedings of the IEEE International Conference on Robotics and
  Automation}, 2020, pp. 3191--3198.

\bibitem{Gammell-ABIT}
------, ``Advanced {BIT* (ABIT*)-} sampling-based planning with advanced
  graph-search techniques,'' in \emph{Proceedings of the IEEE International
  Conference on Robotics and Automation}, 2020, pp. 130--136.

\bibitem{Panagiotis-relevant-region}
S.~S. Joshi and P.~Tsiotras, ``Relevant region exploration on general cost-maps
  for sampling-based motion planning,'' in \emph{Proceedings of the IEEE/RSJ
  International Conference on Intelligent Robots and Systems}, 2020, pp.
  6689--6695.

\bibitem{Srinivasa-GUILD}
A.~Mandalika, R.~Scalise, B.~Hou, S.~Choudhury, and S.~S. Srinivasa, ``Guided
  incremental local densification for accelerated sampling-based motion
  planning,'' in \emph{Proceedings of the IEEE/RSJ International Conference on
  Intelligent Robots and Systems}, 2021.

\bibitem{Pavone-FMT}
L.~Janson, E.~Schmerling, A.~Clark, and M.~Pavone, ``Fast marching tree: A fast
  marching sampling-based method for optimal motion planning in many
  dimensions,'' \emph{The International journal of robotics research}, vol.~34,
  no.~7, pp. 883--921, 2015.

\bibitem{high_dof_cell_github}
``Robotic cell description for benchmarking of motion planners in {MoveIt!}''
  \url{https://github.com/JRL-CARI-CNR-UNIBS/high\_dof\_snake\_robots}.

\bibitem{Moveit}
D.~Coleman, I.~A. Șucan, S.~Chitta, and N.~Correll, ``Reducing the barrier to
  entry of complex robotic software: a {MoveIt!} case study,'' \emph{Journal of
  Software Engineering for Robotics}, vol.~5, no.~1, pp. 3--16, 2014.

\end{thebibliography}

\end{document}